\documentclass{article}

 \usepackage[preprint]{neurips_2026}

\usepackage[utf8]{inputenc} 
\usepackage[T1]{fontenc}    
\usepackage{url}            
\usepackage{booktabs}       
\usepackage{amsfonts}       
\usepackage{nicefrac}       
\usepackage{microtype}      
\usepackage{xcolor}         

\usepackage{tikz}
\usepackage{subcaption}
\usetikzlibrary{shapes.geometric}

\usepackage{fontawesome5}
\usepackage{custom, customboxes}

\title{Causal learning with the invariance principle}

\author{%
  Francesco Montagna \\
Institute of Science and Technology Austria\\
\texttt{francesco.montagna@ist.ac.at} \\
  \And
  Francesco Locatello \\
Institute of Science and Technology Austria
}

\begin{document}

\maketitle

\begin{abstract}
\looseness-1Causal discovery, the problem of inferring the direction of causality, is generally ill-posed. We use the language of structural causal models (SCM) to show that assuming that the causal relations are acyclic and invariant across multiple environments (e.g., the way minimum wage affects employment rate is stable across different geographical regions), \textit{only} two auxiliary environments are sufficient to infer the causal graph for arbitrary nonlinear mechanisms. Moreover, we demonstrate that this implies identifiability of the SCM functional mechanisms: as a corollary, we show that \textit{two} auxiliary environments are sufficient to guarantee correct counterfactual inference. We empirically support our theoretical results on synthetic data.
\end{abstract}

\section{Introduction}
\looseness-1Causal discovery, the problem of estimating a causal graph from data, is a cornerstone of causal inference, necessary for inferring causal effects under interventions \citep{pearl2009causality}. However, it is {well known} to be ill-posed: {multiple causal models are generally compatible with the observed data distribution}, and the causal graph is \textit{not observationally identifiable}.

\looseness-1The way out is {to impose} additional constraints on the causal structure by collecting more data while still avoiding explicit interventions. This idea underlies many identifiability results in causal discovery 
\citep{peters2015invariant,perry2022causal,jalaldoust2025multidomain,montagna2026on},
independent component analysis \citep{hyvarinen2016time,hyvarinen2019nonlinear,gresele2019incomplete,lachapelle22disentanglement}, and related generalizations (e.g., causal representation learning, \citet{scholkopf21towards}). These constraints often come from auxiliary datasets, which we call \textit{environments}. This idea is also known in {the causal inference} literature under the name of \textit{natural experiments} \citep{titiunik2021natural}: to infer causality, we generally need a control group (a base environment, in our language) and a treatment group (an auxiliary environment). Unlike randomized controlled trials, where the researcher defines the experiment, in natural experiments one only collects observational data from different distributions: if the distribution shifts are subject to sufficient constraints, {pairing observational data suffices} for causal inference (e.g., \citet{card1994} shows that minimum wage affects employment rate assuming that the causal relation is stable across geographical regions). This is the idea at the heart of multi-environment causal inference.

\looseness-1Our work formalizes this idea and provides sufficient conditions for the identifiability of causal graphs for arbitrary nonlinear SCMs from \textit{only} two auxiliary environments: the only requirements {we} impose on the data-generating process {are} that the causal graph is acyclic, and the function $\f: \bfS \mapsto \bfX$ mapping {latent variables $\bfS$ to observed variables $\bfX$} is invertible (which is always assumed in any tractable causal discovery problem). 
{The requirement} of \textit{only} two environments for such a general class of SCMs is remarkable in {light of prior work}, which either {requires} the number of {auxiliary} environments to scale with the number of nodes in the graph \citep{monti2019nonsens} or makes strong distributional assumptions such as Gaussianity of the noise terms \citep{montagna2026on}). Moreover, we show that acyclicity and two auxiliary environments are sufficient for the identifiability of counterfactual outcomes, the highest rung of the causal hierarchy.

\begin{figure*}[t]
\centering
\begin{subfigure}[b]{0.49\linewidth}
\centering
\resizebox{\linewidth}{!}{%
\begin{tikzpicture}[font=\small, >=latex]
\definecolor{Cenv}{RGB}{52,110,165}
\definecolor{Cone}{RGB}{52,110,165}
\definecolor{Ctwo}{RGB}{218,94,56}
\definecolor{Ccor}{RGB}{131,82,168}

\node[rectangle, rounded corners=7pt, draw=Cenv!30, fill=Cenv!3,
      line width=0.8pt, minimum width=3.4cm, minimum height=2.4cm]
     at (0.22, -0.14) {};
\node[rectangle, rounded corners=7pt, draw=Cenv!55, fill=Cenv!5,
      line width=1.0pt, minimum width=3.4cm, minimum height=2.4cm]
     at (0.11, -0.07) {};
\node[rectangle, rounded corners=7pt, draw=Cenv, fill=Cenv!8,
      line width=1.5pt, minimum width=3.4cm, minimum height=2.4cm]
     (envbox) at (0,0) {};
\node[font=\scriptsize\bfseries, text=Cenv] at (0, 0.85) {$e = 0,1,\ldots$};
\node[font=\normalsize\bfseries, text=black] at (0, 0.30) {?};
\node[font=\normalsize] (inst) at (-0.72, -0.15) {\color{Cenv}\faLandmark};
\node[font=\normalsize] (money) at (0.72, -0.15) {\color{Ctwo}\faCoins};
\draw[->, thick, Cenv, transform canvas={yshift=4pt}]
    (inst.east) -- (money.west);
\draw[->, thick, Ctwo, transform canvas={yshift=-4pt}]
    (money.west) -- (inst.east);
\node[font=\scriptsize, text=black] at (0, -0.80) {$p^e_{\mathbf{S}}$ varies};

\node[rectangle, rounded corners=7pt, draw=Ctwo, fill=Ctwo!7,
      line width=1.5pt, minimum width=3.4cm, minimum height=2.4cm,
      text width=3.2cm, align=center, inner sep=8pt]
     (resbox) at (5.1, 0)
{
  \small\textbf{Causal direction}\\
  \small\textbf{identified}\\[6pt]
  {\normalsize\color{Cenv}\faLandmark}
  \;$\longrightarrow$\;
  {\normalsize\color{Ctwo}\faCoins}\\[3pt]
  \scriptsize institution \hspace{2em} prosperity
};

\draw[->, line width=2pt, Cone]
    (envbox.east) -- (resbox.west)
    node[midway, above=5pt, rectangle, rounded corners=3pt,
         draw=Cone, fill=white, line width=0.9pt,
         font=\scriptsize\bfseries, text=Cone, inner sep=3pt]
    {Theorem~1};
\end{tikzpicture}%
}
\caption{\textbf{Colonial institutions and long-run prosperity.}
Different European colonies serve as environments modulating the background conditions $p_{\bfS}^e$, while
$\mathbf{f}: \textnormal{institutions } \mapsto \textnormal{ prosperity}$ is invariant.
\citet{acemoglu2001} show that the inclusiveness of institutions towards indigenous people affects long-term prosperity.}
\label{fig:colony}
\end{subfigure}
\hfill
\begin{subfigure}[b]{0.49\linewidth}
\centering
\resizebox{\linewidth}{!}{%
\begin{tikzpicture}[font=\small, >=latex]
\definecolor{Cenv}{RGB}{52,110,165}
\definecolor{Cone}{RGB}{52,110,165}
\definecolor{Ctwo}{RGB}{218,94,56}
\definecolor{Ccor}{RGB}{131,82,168}

\node[rectangle, rounded corners=7pt, draw=Cenv!30, fill=Cenv!3,
      line width=0.8pt, minimum width=3.4cm, minimum height=2.4cm]
     at (0.22, -0.14) {};
\node[rectangle, rounded corners=7pt, draw=Cenv!55, fill=Cenv!5,
      line width=1.0pt, minimum width=3.4cm, minimum height=2.4cm]
     at (0.11, -0.07) {};
\node[rectangle, rounded corners=7pt, draw=Cenv, fill=Cenv!8,
      line width=1.5pt, minimum width=3.4cm, minimum height=2.4cm]
     (envbox) at (0,0) {};
\node[font=\scriptsize\bfseries, text=Cenv] at (0, 0.85) {$e = 0,1,\ldots$};
\node[font=\normalsize] (wage) at (-0.72, -0.08) {\color{Cenv}\faMoneyBillWave};
\node[font=\normalsize] (empl) at (0.72, -0.08) {\color{Ctwo}\faBriefcase};
\draw[->, thick, Cenv] (wage.east) -- (empl.west);
\node[font=\scriptsize, text=black] at (0, -0.80) {$p^e_{\mathbf{S}}$ varies};

\node[rectangle, rounded corners=7pt, draw=Ctwo, fill=red!7,
      line width=1.5pt, minimum width=3.4cm, minimum height=2.4cm,
      text width=3.2cm, align=center, inner sep=8pt]
     (resbox) at (5.1, 0)
{
  \small\textbf{Counterfactual}\\
  \small\textbf{identification}\\[8pt]
  \scriptsize $\mathbf{f}:\text{\normalsize\color{Cenv}\faMoneyBillWave}\longmapsto\text{\normalsize\color{Ctwo}\faBriefcase\hspace{1.5em}}$\\[3pt]
  \hspace{3em}wage \textcolor{Ctwo!7}{$\longmapsto$} employment
};

\draw[->, line width=2pt, Cone]
    (envbox.east) -- (resbox.west)
    node[midway, above=5pt, rectangle, rounded corners=3pt,
         draw=Cone, fill=white, line width=0.9pt,
         font=\scriptsize\bfseries, text=Cone, inner sep=3pt]
    {Thm.~2\,+\,Cor.~1};
\end{tikzpicture}%
}
\caption{\textbf{Minimum wage and empolyment rate.}
In \citet{card1994}, New Jersey and Pennsylvania act as two
environments: given the causal graph, a minimum-wage increase in New Jersey, but not in
Pennsylvania, creates exogenous variation that identifies the invariant causal effect $\mathbf f$.}
\label{fig:wage}
\end{subfigure}
\caption{Natural experiments as instances of our multi-environment framework.}
\label{fig:natural_experiments}
\end{figure*}

\subsection{Summary of contributions}
From a theoretical perspective, our paper consists of three main contributions. We (informally) summarize them below. 

\begin{callout}
    \begin{theorem}[Informal version of causal graph identifiability]\label{thm:full_graph_identifiability}
    Given a structural causal model $\bfX = \f(\bfS)$ with bijective function $\f$, the causal graph is identifiable from \emph{only} two sufficiently different auxiliary environments.
    \end{theorem}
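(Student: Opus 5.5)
The plan is to reduce graph identifiability to a statement about the Jacobian of the inverse map $\mathbf{g}=\f^{-1}:\bfX\mapsto\bfS$, which recovers the exogenous noises from the observations. Write the observed log-density in environment $e$ via change of variables:
\[
\log p^e_{\bfX}(\bfx)=\sum_{i}\log p^e_{S_i}\bigl(g_i(\bfx)\bigr)+\log\bigl|\det J_{\mathbf{g}}(\bfx)\bigr| .
\]
The noises are mutually independent within each environment and only their marginals change across environments, while $\f$ is invariant. Subtracting the base environment $e=0$ from each auxiliary environment $e\in\{1,2\}$ cancels the Jacobian term:
\[
\log\frac{p^e_{\bfX}(\bfx)}{p^0_{\bfX}(\bfx)}=\sum_i \psi^e_i\bigl(g_i(\bfx)\bigr),
\qquad
\psi^e_i=\log p^e_{S_i}-\log p^0_{S_i}.
\]
The left-hand side is observable. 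The right-hand side is an additive function of the components of $\mathbf{g}$.

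Next, suppose a competing acyclic model $(\tilde{\f},\tilde{\mathbf{g}})$ induces the same observational distributions in all three environments. Define $\mathbf{h}=\tilde{\mathbf{g}}\circ\f:\bfS\mapsto\tilde{\bfS}$ and equate the two additive expansions. Differentiating in $s_j$ and $s_k$ for $j\neq k$ gives
\[
\sum_i \Bigl[\psi^{e\,\prime\prime}_i\,\partial_j h_i\,\partial_k h_i+\psi^{e\,\prime}_i\,\partial^2_{jk}h_i\Bigr]=0
\]
for $e=1,2$, since the left side is separable in $\bfs$. The same identity also holds for the base log-density itself. Together these give a small system of equations pointwise in $\bfs$.

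The goal is to show that $\mathbf{h}$ is a permutation composed with componentwise reparametrizations. For this I would exploit acyclicity directly. Both $J_{\mathbf{g}}$ and $J_{\tilde{\mathbf{g}}}$ are, up to a permutation, lower triangular with nonzero diagonal, since $g_i$ depends only on $x_i$ and its parents. Proceed by induction on a source node of the true graph and a source of the alternative graph. The corresponding components satisfy $S_i=g_i(x_i)$, a function of a single observed variable. Using the two environment equations, which are "sufficiently different" in the sense that the vectors $(\psi^{1\prime}_i,\psi^{2\prime}_i)$ and $(\psi^{1\prime\prime}_i,\psi^{2\prime\prime}_i)$ are linearly independent across nodes, I would show that a source of one model must be a source of the other and must match the same noise up to reparametrization. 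Then I would condition on, or remove, that node and recurse. This is the step where only two environments suffice. Triangularity leaves only a single cross-term per pair at each stage, so two equations are enough to eliminate it, rather than the $2d$ equations needed for general nonlinear ICA.

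Once $\mathbf{h}$ is shown to be a permuted componentwise map, the sparsity pattern of $J_{\mathbf{g}}$, which is the graph plus the diagonal, is identified up to that permutation. The permutation is pinned down because $g_i$ must depend nontrivially on $x_i$. So the causal graph is identified.

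I expect the main obstacle to be the induction step: showing that the mixed-derivative equations force $\partial_j h_i\,\partial_k h_i=0$ when only two environments are available. The genericity condition must be formulated so that it rules out degenerate cases, such as noise shifts with $\psi^{e\prime\prime}_i\equiv 0$ (pure location-free log-linear tilts). I would first try to state "sufficiently different" as pointwise linear independence of the $2\times 2$ matrix built from the first and second derivatives of $\psi^1_i$ and $\psi^2_i$ on a dense set.
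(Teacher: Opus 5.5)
Your additive identity $\log(p^e_{\bfX}/p^0_{\bfX})=\sum_i\psi^e_i(g_i(\mathbf x))$ is correct. There is, however, a genuine gap: the argument fails at exactly the step you flag, and the reason you give for why it should go through is circular.

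\textbf{Why the induction step does not close.} The cross-derivative identities are driven by $\mathbf h=\tilde{\mathbf g}\circ\f$. Its Jacobian $J_{\tilde{\mathbf g}}J_{\f}$ is a product of two matrices that are triangular in possibly different orders, so it is generally dense, and for a fixed pair $(j,k)$ every $i$ contributes. As you wrote the identity, with $\mathbf h$ the coefficients should be the competitor's $\tilde\psi^e_i$, not the true $\psi^e_i$. To use a variability condition on the true noises, write it instead with the true $\psi^e_i$ and $\mathbf v=\mathbf h^{-1}$. Then the unknowns are the $2d$ quantities $\partial_j v_i\,\partial_k v_i$ and $\partial^2_{jk}v_i$. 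Two environments give a $2\times 2d$ linear system whose kernel has dimension at least $2d-2$. This is precisely why the nonlinear-ICA argument needs $\mathcal O(d)$ environments.

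Your remedy, ``triangularity leaves only a single cross-term per pair'', holds only if $\mathbf h$ is itself triangular. That means the two models already share a causal order, which is what you are trying to prove. Even at the first stage, a source $b$ of the competitor gives $h_b(\mathbf s)=\tilde g_b(f_b(\mathbf s))$, which depends on all true ancestors of $b$, so nothing isolates it.

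Once a shared order is known, the environments are not needed at all. Independence of $\tilde S_i$ from $\mathbf S_{1:i-1}$, together with uniqueness of monotone transport, already forces $\mathbf h$ to be componentwise. This is how the paper proves its ICA theorem, \emph{after} the graph is identified. So everything the environments must contribute lies in establishing the order, and your sketch leaves that part open. Separately, the base environment does not satisfy the same identity: it retains the term $\partial^2_{jk}\log|\det J_{\mathbf h}|$.

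\textbf{The missing idea: an observable sink test.} Rather than comparing against a competing model, differentiate your identity once in $\mathbf x$ instead of twice in $\mathbf s$. For a sink $X_l$, $x_l$ enters only $g_l$, so
$\partial_{x_l}\log(p^e_{\bfX}/p^0_{\bfX})=\partial_{x_l}g_l\,\psi^{e\prime}_l(s_l)$.
The ratio $R_l$ of the two auxiliary-environment versions cancels the unknown but environment-invariant factor $\partial_{x_l}g_l$, leaving a function of $S_l$ alone. Since $\mathbf X_{-l}$ is a function of $\mathbf S_{-l}$, this gives $R_l\indep\mathbf X_{-l}$.

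For a non-sink $X_i$, the paper's sufficient-variability condition (ii) makes $R_i$ depend on some $S_j$ with $j\neq i$. That condition is non-collinearity of $(\Delta(0,1,\mathbf x)_i,\Delta(0,2,\mathbf x)_i)$ with its $s_j$-derivative. Faithfulness of the joint SEM, in which $\mathbf X=\f(\mathbf S)$ and $\mathbf R$ is a function of $\mathbf S$, then turns the paths $S_j\to R_i$ and $S_j\to X_j$ into $R_i\notindep X_j$.

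This yields the criterion ``$X_i$ is a sink iff $R_i\indep\mathbf X_{-i}$'', stated entirely in observable quantities. Peeling sinks recursively gives a causal order, and faithfulness then gives the graph. Two auxiliary environments suffice because one ratio cancels one scalar Jacobian factor, not because some linear system becomes full rank. Note also that your plan never invokes faithfulness, whereas the paper's converse (non-sink) direction depends on it.
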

\end{callout}

\begin{callout}
    \begin{theorem}[Informal version of ICA identifiability]\label{thm:ica_identifiability}
    Given a structural causal model $\bfX = \f(\bfS)$ with $\f$  bijective, and two sufficiently different auxiliary environments, the function $\f$ is identifiable up to trivial indeterminacies.
    \end{theorem}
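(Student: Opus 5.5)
The plan is to reduce function identifiability to causal-graph identifiability (Theorem~\ref{thm:full_graph_identifiability}) and then exploit acyclicity to undo the remaining mixing. Write the model as $\bfX = \f(\bfS)$, where $\f$ is bijective and has the triangular structure induced by the DAG: after ordering nodes topologically, $X_i = f_i(\bfX_{\mathrm{pa}(i)}, S_i)$. The environments share $\f$ but have different $p^e_{\bfS}$, with the $S_i$ mutually independent in each environment. Suppose $(\f, p^e_{\bfS})_{e=0,1,2}$ and $(\hat\f, \hat p^e_{\bfS})_{e=0,1,2}$ induce the same observed distributions $p^e_{\bfX}$. By Theorem~\ref{thm:full_graph_identifiability}, both models have the same graph, so they admit a common topological order and both $\f^{-1}$ and $\hat\f^{-1}$ are lower-triangular, in the sense that $S_i$ depends only on $X_{\le i}$. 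Hence $\h := \hat\f^{-1}\circ\f$, which maps $\bfS$ to $\hat\bfS$, is a triangular diffeomorphism: $\hat S_i = h_i(S_1,\dots,S_i)$. The goal is to show that each $h_i$ depends only on $S_i$. That is the claimed ``trivial indeterminacy'': componentwise invertible reparametrisation of the exogenous noises, which leaves $\f$ fixed up to relabelling of its noise inputs.

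To show this, I will use the change of variables in each environment:
\[
\log p^e_{\bfS}(\bfs)=\sum_i \log \hat p^e_i(h_i(\bfs)) + \log\left|\det J_{\h}(\bfs)\right| .
\]
Because $\h$ is triangular, the Jacobian determinant is $\prod_i \partial_{s_i} h_i$, and it does not depend on $e$. Subtracting the equation for environment $0$ from those for environments $1$ and $2$ removes the Jacobian term. This leaves two identities of the form
\[
\sum_i \big[\log q^e_i(s_i)\big] = \sum_i \log r^e_i(h_i(\bfs)), \qquad e=1,2,
\]
where $q^e_i = p^e_i/p^0_i$ and $r^e_i=\hat p^e_i/\hat p^0_i$. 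I will then argue by induction along the topological order. For the last variable and its descendant-free position, I will differentiate these identities with respect to $s_j$ for $j<i$ and use that the left side is additively separable, so mixed derivatives $\partial_{s_j}\partial_{s_k}$ with $j\neq k$ vanish. This yields linear constraints on the cross-derivatives $\partial_{s_j} h_i$. The ``sufficiently different'' assumption will be formalised as a rank condition: the vectors $\big((\log r^e_i)'\big)_{e=1,2}$ and $\big((\log r^e_i)''\big)_{e=1,2}$ are linearly independent. Under that condition, the only solution is $\partial_{s_j} h_i = 0$ for $j<i$.

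The main obstacle is exactly this step: with only two auxiliary environments, the standard Hyv\"arinen-style argument, which needs $2n+1$ environments, is not available. The rescue has to come from triangularity. The unknowns in row $i$ are a few functions constrained by equations that are indexed by pairs $(j,k)$, and the triangular structure lets me peel variables off one at a time. At each step only the single new component $h_i$ is unknown, so two environments supply enough equations (first and second derivatives in $s_i$). I would first try the case of two variables, $\hat S_2 = h_2(S_1,S_2)$: there I would differentiate in $s_1$ and $s_2$, and use the rank condition to force $\partial_{s_1}h_2\equiv 0$. After that I would extend the argument inductively. Finally, having shown $\h$ is componentwise, $\hat\f = \f\circ \h^{-1}$ equals $\f$ up to invertible componentwise reparametrisation of the noise terms, which completes the proof.
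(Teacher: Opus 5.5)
Your opening matches the paper's. \cref{thm:full_graph_identifiability} gives a common causal order, so $\mathbf h=\widehat{\mathbf f}^{-1}\circ\mathbf f$ is a triangular diffeomorphism with $\hat S_i=h_i(S_1,\dots,S_i)$.

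After that the routes diverge. The paper never uses the auxiliary environments again. It inducts forward: if $h_j=h_j(s_j)$ for $j<i$, then $\hat S_i\indep \mathbf S_{1:i-1}$. Hence, for almost every $c$, the map $s_i\mapsto h_i(c,s_i)$ pushes the law of $S_i$ onto the law of $\hat S_i$. These maps are strictly monotone, with an orientation independent of $c$, because $\partial_{s_i}h_i$ is continuous and nonvanishing on a connected domain. Uniqueness of monotone scalar transport (\cref{lem:monotone_scalar_transport}) then forces $h_i(c,\cdot)=h_i(c',\cdot)$. So once the order is known, the obstacle you single out does not arise: this is uniqueness of the Knothe--Rosenblatt map, and one environment with independent sources suffices.

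Your Hyv\"arinen-style argument can be made to work, but it pays for differencing away the log-determinant. That step discards exactly the single-environment independence the paper exploits. You then have to compensate with a per-node rank condition that is not part of \cref{ass:sufficient_variability} and is strictly stronger than needed. Take zero-mean Gaussian sources with variance shifts, the paper's own example in \cref{app:linear_gauss_sufficient_variability}. There $(\log q^e_i)'(s)=-\alpha_i^e s$ and $(\log q^e_i)''=-\alpha_i^e$ are collinear for every $s$, so your condition fails everywhere, yet $\mathbf f$ is identifiable. Your route buys a purely local, differential argument in the style of multi-environment ICA; it costs this extra hypothesis, plus twice-differentiable densities.

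Even granting that hypothesis, your version needs two repairs.

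First, you impose the rank condition on $r^e_i=\hat p^e_i/\hat p^0_i$, i.e.\ on the arbitrary competing model, which is not an admissible identifiability assumption. Run the argument on $\mathbf u:=\mathbf h^{-1}$ instead, using $\sum_i\log r^e_i(\hat s_i)=\sum_i\log q^e_i(u_i(\hat{\mathbf s}))$. The condition then falls on the true $q^e_i$.

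Second, state explicitly that the induction runs in reverse topological order. The $\hat s_i$-derivative of the identity contains $(\log q^e_k)'(u_k)\,\partial_{\hat s_i}u_k$ for every $k\ge i$. So ``only the $i$-th component is unknown'' holds only after you have shown $u_k=u_k(\hat s_k)$ for all $k>i$, i.e.\ when you start from the last coordinate.

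With both fixes, differentiate in $\hat s_i$ and then in $\hat s_j$, $j<i$. For $e=1,2$ this gives $(\log q^e_i)''(u_i)\,\partial_{\hat s_j}u_i\,\partial_{\hat s_i}u_i+(\log q^e_i)'(u_i)\,\partial_{\hat s_j}\partial_{\hat s_i}u_i=0$. Your condition makes this $2\times 2$ system nonsingular, so $\partial_{\hat s_j}u_i=0$ because $\partial_{\hat s_i}u_i\neq 0$. You still need connected fibres (e.g.\ interval supports) to pass from vanishing partials to global independence from $\hat s_j$.
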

\end{callout}
As a corollary of \cref{thm:ica_identifiability}, we have the following result:
\begin{callout}
    \begin{corollary}[Informal corollary of \cref{thm:ica_identifiability}]\label{thm:counterfactual}
        A structural causal model estimated from three sufficiently distinct environments is enough for counterfactual inference.
    \end{corollary}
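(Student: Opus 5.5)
The plan is to reduce counterfactual inference to the identifiability statement of \cref{thm:ica_identifiability}. We use the standard three-step procedure of abduction, action and prediction. The three environments are one base environment plus the two sufficiently different auxiliary environments that \cref{thm:ica_identifiability} requires. By that theorem, any estimated model $\hat{\f}$ that fits all three environments satisfies $\hat{\f} = \f \circ \mathbf{h}$. The indeterminacy $\mathbf{h}$ should be "trivial": I expect a permutation composed with component-wise invertible reparametrizations of the latent variables $\bfS$. The permutation is fixed by the causal ordering, which \cref{thm:full_graph_identifiability} identifies, so I take $\mathbf{h} = (h_1,\dots,h_d)$ with each $h_i$ a scalar bijection acting on the noise $S_i$.

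\textbf{Step 1 (abduction).} Given a factual observation $\mathbf{x}$, the true exogenous value is $\mathbf{s} = \f^{-1}(\mathbf{x})$, which is well defined because $\f$ is bijective. The estimated value is $\hat{\mathbf{s}} = \hat{\f}^{-1}(\mathbf{x}) = \mathbf{h}^{-1}(\mathbf{s})$. So each recovered noise coordinate is a fixed invertible transform of the true one.

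\textbf{Step 2 (action).} Under the intervention $do(X_j = c)$, the structural assignment of $X_j$ is replaced by the constant $c$. All other assignments $X_i = f_i(\mathrm{PA}_i, S_i)$ are kept. Here I need the mechanisms $f_i$ themselves, not only the reduced form $\f$. They can be read off from the triangular structure: because the graph is acyclic and identified, $\f$ is triangular in a topological order. One can then unroll $X_i = f_i(\mathrm{PA}_i, S_i)$, which gives $\hat f_i(\mathrm{pa}, \hat s_i) = f_i(\mathrm{pa}, h_i(\hat s_i))$. The estimated mechanism is therefore the true one, precomposed with $h_i$ in its own noise argument only.

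\textbf{Step 3 (prediction).} Propagate $c$ and $\hat{\mathbf{s}}$ through the modified estimated SCM in topological order. By induction along the order, every node gets the same value as in the true counterfactual computation. At each node the estimated mechanism is evaluated at $\hat s_i$, and $f_i(\cdot, h_i(\hat s_i)) = f_i(\cdot, s_i)$, so the factor $h_i$ cancels. Hence the counterfactual $\mathbf{X}_{do(X_j=c)}(\mathbf{x})$ is identified, and it does not depend on which observationally equivalent estimate is chosen.

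\textbf{Main obstacle.} The delicate step is showing that the indeterminacy in \cref{thm:ica_identifiability} is element-wise and aligned with the causal ordering. A general mixing would let $\hat s_i$ depend on several true noises, and the cancellation in Step 3 would fail. My first approach is to use acyclicity: both $\f$ and $\hat{\f}$ are triangular with the same identified graph, so $\mathbf{h} = \f^{-1}\circ\hat{\f}$ is triangular. The invariance across environments, i.e.\ the independence of the $\hat S_i$ in each environment, should then force the off-diagonal dependence to vanish. A second subtlety is that counterfactuals are only defined for nodes whose noise enters invertibly. This follows because $\f$ is bijective, so each $f_i$ is invertible in $S_i$ for fixed parents.
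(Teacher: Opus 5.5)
Your proposal is correct and follows essentially the same route as the paper. You invoke \cref{thm:ica_identifiability} to obtain $\hat{\mathbf f} = \mathbf f \circ \mathbf h$ with a componentwise $\mathbf h$ aligned to the identified causal order. Abduction then gives $\hat{\mathbf s} = \mathbf h^{-1}(\mathbf s)$, so the reparametrization cancels inside each mechanism $f_i(\cdot, h_i(\hat s_i))$, and induction along the topological order finishes the argument. Your $\mathbf h$ is simply the inverse of the paper's, and the ``main obstacle'' you flag is exactly the content of the ICA identifiability theorem, which the paper's proof of the corollary also takes as given.
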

\end{callout}

Empirically, based on our theory, we introduce MCD (Multienvironment Causal Discovery), the first algorithm 
that, in the population limit, provably returns the correct causal graph for \textit{arbitrary} noise distribution and causal mechanisms, given \textit{only} two auxiliary environments.


\section{Preliminaries}
Throughout this paper, we take the perspective that a structural causal  model with independent latent variables is an independent component analysis (ICA) model of the form 
\begin{equation}\label{eq:ica}
    \begin{split}
        &\bfX = \f(\bfS), \quad \bfX \in \mathcal X \subseteq \R^d\\
        &p_{\bfS}(\s) = \prod_{i=1}^d p(s_i), \quad \forall \s \in \mathcal S \subseteq \R^d, \\
        &\exists P \in \R^{d \times d} \textnormal{ permutation s.t. } P J_{\f} P^T \textnormal{ is lower triangular.}
    \end{split}
\end{equation}
\looseness-1Assume $p_{\bfS}$ strictly positive density. The permutation constraint distinguishes SCMs from standard ICA models: without it, \cref{eq:ica} reduces to a general ICA model; with it, the Jacobian $J_{\f}$ is triangularizable, which is equivalent to acyclicity of $\mathcal G$.
(Uppercase letters, e.g. $\bfS$, are used for random variables, lowercase letters for their realization.) This perspective is quite established in the literature 
\citep{monti2019nonsens,hyvarinen2023identifiability,reizinger2023jacobianbased,montagna2026on}. While independent component analysis aims to recover $\f$ and $p_{\bfS}$ (assuming a density exists) from observations of $\bfX$, causal discovery is a much simpler problem: the goal here is to infer the causal graph that is imposed on $\bfX$ variables by $\f$. In particular, an SCM assumes a set of structural causal equations  
\begin{equation}\label{eq:scm}
X_i := F_i(\mathbf X_{\Parents_i}, S_i), \hspace{.5em}\forall i = 1,...,d,
\end{equation}
\looseness-1 where $\mathbf X_{\Parents_i}$ are the causes of $X_i$, specified by a directed acyclic graph (DAG) $\mathcal G$ with nodes $\bfX$; $\Parents_i \subset \set{1,...,d}$ are the parent indices of $X_i$. The $F_i$ functions are the \textit{causal mechanisms}. We further assume no latent common causes, no selection bias, and causal minimality. Recursively applying \cref{eq:scm} gives exactly \cref{eq:ica}.

\paragraph{Causal order.} Intuitively, the causal order is the direction of causality, and is the main object of interest in causal discovery. For example, in $X_1 \to X_2$, the only valid causal order is $\set{1,2}$; for a graph $X_2 \leftarrow X_1 \to X_3$, $\set{1, 2, 3}$ or $\set{1, 3, 2}$ are \textit{valid} causal orders; $\set{2, 3, 1}$ is not. 

The causal order of a graph is the fundamental object of inference in causal discovery. When the causal order can be recovered from observations of $\bfX$, we say that the causal graph is \textit{identifiable}. In general, this requires access to a control and a treatment group: in this work, multiple environments serve as naturally arising control and treatment groups in passively collected data.
%

\subsection{Base and auxiliary environments as control and treatment groups in natural experiments}
Consider the SCM of \cref{eq:ica}. We call this the \textit{base} environment. An \textit{auxiliary} environment is the structural causal model:
\begin{equation}
    \begin{split}
        &\bfX^e = \f(\bfS^e),\quad p^e_{\bfS}(\s) = \prod_{i=1}^d p^e(s_i)
    \end{split}
\end{equation}
The idea is that we have access to the distribution of $\bfX$ under several shifts corresponding to environments, and postulate that these shifts are not arbitrary and follow an invariance principle.
\begin{callout}
    \textbf{The invariance principle.} An environment is characterized by
    \begin{enumerate}
        \item \textit{Invariant} causal mechanisms $\f$;
        \item $p_{\bfS}^0 \neq p_{\bfS}^e$, with $0$ denoting the base environment, and $e=1,...,|\mathcal E|$,
    \end{enumerate}
    where $\mathcal E \subset \mathbb N$ collects the indices of all auxiliary environments.
\end{callout}
\begin{definition}[Informal definition of identifiability for causal discovery]\label{def:cd_identifiability}
   \looseness-1The causal graph $\mathcal G$ is \textit{identifiable} if the collection of observed distributions $(p_{\bfX}^e)_{e=0}^{|\mathcal E|}$ uniquely determines the causal graph $\mathcal G$.
\end{definition}

Stronger notions are possible: more importantly, we have identifiability in the ICA sense. 

\begin{definition}[Informal definition of identifiability for ICA]\label{def:ica_identifiability}
   An ICA model is \textit{identifiable} if the collection of observed distributions $(p_{\bfX}^e)_{e=0}^{|\mathcal E|}$ uniquely determines $\f$ and $p_{\bfS}^e$ up to trivial indeterminacies.
\end{definition}

We now state the main research problem.

\begin{emptyroundbox}
    \looseness-1\textbf{Problem definition.} We characterize when a causal graph $\mathcal G$ is identifiable from the fewest possible environments. Moreover, we find the minimal set of assumptions under which the causal mechanisms and noise terms are also identifiable from a few environments in the ICA sense. 
\end{emptyroundbox}

\paragraph{Environments as natural experiments.} \looseness-1The notion of environment was popularized in causal discovery by \citet{peters2015invariant}, but the underlying idea is older: the base environment is a control group and an auxiliary environment is a treatment group. When such groups arise naturally rather than being engineered, we speak of \textit{natural experiments} \citep{titiunik2021natural}: our framework is precisely a formalization of this concept with SCMs. Notable instances of causal inference with natural experiments are presented in \citet{acemoglu2001} and \citet{card1994}, and illustrated in \cref{fig:colony} and \cref{fig:wage}, serving as real-world motivating examples for our theory.

Next, we demonstrate our main theoretical claims, namely, that \textit{only} two auxiliary environments are sufficient to identify the causal graph and the mixing function of any acyclic and bijective SCM. 

\section{Theory of identifiability of bijective causal models}
This section is devoted to provide the intuition and formalize the theoretical results of \cref{thm:full_graph_identifiability}, \cref{thm:ica_identifiability}, and \cref{thm:counterfactual}. In particular, we show that causal discovery for arbitrary invertible SCMs can be achieved with \textit{only} two auxiliary environments; moreover, we find that this is equivalent to ICA identifiability when the structural causal model is assumed acyclic.


\subsection{Causal structure learning from multiple environments}

To develop our theory, we need to constrain the class of structural causal models we consider. Intuitively, the main restrictive assumption that we make is invertibility of the causal mechanism $\mathbf f$. 

\begin{assumption}[Invertibility]\label{ass:diffeomorphism}
    $\mathbf f$ is a global diffeomorphism.
\end{assumption}
%
\begin{assumption}[Faithfulness]\label{ass:faithfulness}
    Consider $\mathbf Z \in \R^m$ and $\mathbf V \in \R^n$ random vectors, and the acyclic structural equation model $Z_i := g_i(\mathbf V_{\Parents_i})$, and a DAG $\mathcal G$ with nodes $\mathbf Z, \mathbf V$ and edges from $\mathbf V_{\Parents_i}$ to $Z_i$. We assume faithfulness of $p_{\mathbf Z}$ density to $\mathcal G$, i.e.:
    $$
    \mathbf Z_{A} \indep \mathbf Z_{B} | \mathbf Z_{C} \implies \mathbf Z_{A} \indep_d \mathbf Z_{B} | \mathbf Z_{C},
    $$
    where $\indep_d$ denotes classical d-separation on $\mathcal G$, and $\indep$ denotes probabilistic conditional independence. $A, B, C \subseteq \set{1,...,m}$ are non-overlapping sets of indices.
\end{assumption}
\begin{assumption}[Sufficient variability]\label{ass:sufficient_variability} We have access to at least two auxiliary environments $e=1,2$. Let
$\Delta(0, e, \x) := \nabla \left(\log p(\x)^0 - \log p(\x)^e\right)$. Then:
\begin{enumerate}[(i)]
    \item $\Delta(0,2,\x)_i \neq 0 \: \text{a.s.}$, $i=1,...,d$.
    \item For every non-sink $X_i$, there exists $j \neq i$ such that the vectors $\big(\Delta(0,1,\x)_i,\Delta(0,2,\x)_i\big)$ and $\big(\partial_{s_j}\Delta(0,1,\x)_i,\partial_{s_j}\Delta(0,2,\x)_i\big)$ are not collinear on a positive-measure set.
\end{enumerate}
\end{assumption}

\paragraph{Comparison with assumptions in the literature.}\cref{ass:diffeomorphism} is ubiquitous when studying identifiability of causal graphs and beyond (in ICA and causal representation learning): causal graph identifiability from pure observations requires additivity of the noise terms \citep{hoyer2008anm} or variations of it, as in the post-nonlinear and location scale noise models \citep{zhang2009pnl,immer2022on}. All these parametric assumptions directly imply diffeomorphism \citep{dominguez2023on} (hence, our \cref{ass:diffeomorphism}); instead, the viceversa is not true, making our hypothesis strictly less demanding (for example, the celebrated additive noise models of \citet{hoyer2008anm} equivalently assume that all the eigenvalues of the Jacobian of the mixing function are equal to $1$, a very strong requirement) . In the multienvironment setting, bijectivity is also a common requirement \citep{peters2015invariant,heinze2018invariant,ghassami2018multidomain,jalaldoust2025multidomain,montagna2026on}.  
\cref{ass:faithfulness} of faithfulness is a common requirement in causal discovery, and known to be generic (almost always satisfied) at the population level; hence, it is mild and non-restrictive. Our formulation for arbitrary structural equation models is just a variation of the usual faithfulness for standard causal models (including SCMs). Genericity of \cref{ass:faithfulness} is discussed in detail in \cref{app:faithfulness}.
Finally, \cref{ass:sufficient_variability} of sufficient variability comes in different forms but is standard in multienvironment identifiability arguments in causal discovery \citep{montagna2026on,chevalleyderiving}, ICA \citep{hyvarinen2016time,lachapelle22disentanglement}, and causal representation learning \citep{ahuja2023multi,varici2025score,ng2025causal}, and just excludes pathological environment choices. Part \textit{(i)} simply requires a treatment group that is sufficiently different in the entire domain. Part \textit{(ii)} has a less immediate interpretation: it is a geometric requirement asking that the first-order differences of densities are not aligned with second-order differences of the densities. Intuitively, this should be true, as there is no geometric reason to believe that such alignment should occur. This can be proven more formally for quite a broad class of SCMs: in \cref{app:sufficient_variability}, we show that for SCMs with a finite number of parameters in the Euclidean space, whose mixing function is analytic, the set of density parameters that lead to violations of the assumption lives in a  Lebesgue-zero measure subset of the Euclidean space. Such geometric requirements on the class of allowed environments are present for any result under the umbrella of \textit{identifiability with multiple environments} that we mentioned, suggesting that it is somehow unavoidable to discard pathological cases, as we find ourselves. Finally, we note that \cref{ass:sufficient_variability} requires two auxiliary environments. All the results that follow can be generalized to the case where we have access to an arbitrary number of environments: in which case, we partition the auxiliary environments in non-overlapping groups $\mathcal E_1, \mathcal E_2$, and define $\Delta(0,e,\x) := \nabla\left(\log p^0(\x) - \sum_{\epsilon \in \mathcal E_e} \log p^\epsilon(\x) \right)$.

Next, we show that given our modeling assumptions, we can demonstrate identifiability of the causal graph. We provide a step-by-step derivation of the results with two variables via an example. 

\begin{example}\label{example:bivariate_sink}
    Consider a bivariate structural causal model $x_2 := f_2(x_1, s_2)$, such that $\mathbf f: s_1, s_2 \mapsto s_1, x_2$. Let $\x = \mathbf f(\mathbf s)$, where by injectivity of $\mathbf f$, $\mathbf s$ is uniquely determined. Given three environments $p^e$, $e \in \set{0, 1, 2}$, we write the score function (gradient of the log-likelihood) of $\mathbf x$:
    \begin{equation}
        \nabla \log p^e(\mathbf x) = J^T_{\mathbf f^{-1}}(\mathbf x) \nabla \log p^e(\mathbf s) + \nabla \log |J_{\f^{-1}}(\mathbf x)|,
    \end{equation}
    where $|J_{\f^{-1}}(\mathbf x)|$ denotes the determinant. The latter expression can be easily verified via the change of variable formula for densities. We note that the gradient of the log-det term (the volume arising from the change of variable formula) does not depend on the index of the environment. Then, if we take the difference of the score of the base and auxiliary environments, it cancels out as follows:
    \begin{equation}
        \Delta(0, e, \mathbf x) := \nabla \left(\log p^0(\mathbf x) - \log p^e(\mathbf x)\right) = J^T_{\mathbf f^{-1}}(\mathbf x) \nabla \left(\log p^0(\mathbf s) - \log p^e(\mathbf s)\right), 
    \end{equation}
    that is, the derivative of the log-odds ratio does not depend on the  volume term. Let's unpack the $\Delta$ term componentwise:
    \begin{equation} \label{eq:deltas}
        \begin{split}
            &\Delta(0, e, \mathbf x)_1 = \partial_{s_1}\left(\log p^0(s_1) - \log p^e(s_1)\right) + \partial_{x_1} s_2 \partial_{s_2}\left(\log p^0(s_2) - \log p^e(s_2)\right) \\
            &\Delta(0, e, \mathbf x)_2 = \partial_{x_2} s_2 \partial_{s_2}\left(\log p^0(s_2) - \log p^e(s_2)\right). 
        \end{split}
    \end{equation}
    We know, by assumption, that the correct causal model (hence, the SCM with correct causal direction) must have mutually independent sources. Previous literature exploited this hypothesis in the context of additive noise models. In particular, \citet{montagna2025score} shows that for observationally identifiable SCMs, the score of a sink node $X_l$ is a \textit{only} a function of the source node $S_l$, and uses this intuition to demonstrate identifiability of the causal order. However, this is not the case for the second equality in \eqref{eq:deltas}, due to the multiplier $\partial_{x_2} s_2 = \partial_{x_2} \mathbf f^{-1}_{2}(x_1, x_2)$. To recover the desired situation of the score (a function of the score, in our case) of the sink node $X_2$ depending only on $S_2$, we note that the term $\partial_{x_2} s_2$ is uniquely determined by the form of $\f$, the causal mechanisms, that is invariant with respect to the environment index. We exploit this fact by taking the ratio
    $$
    \bfR(\s) := \frac{\Delta(0, 1, \mathbf \f(\s))}{\Delta(0, 2, \f(\s))},
    $$
    where
    \begin{equation}\label{eq:example_ratios}
        \begin{split}
        &R_1 := \bfR_1(\s) = \frac{\partial_{s_1}\left(\log p^0(s_1) - \log p^1(s_1)\right) + \partial_{x_1} s_2 \partial_{s_2}\left(\log p^0(s_2) - \log p^1(s_2)\right)}{\partial_{s_1}\left(\log p^0(s_1) - \log p^2(s_1)\right) + \partial_{x_1} s_2 \partial_{s_2}\left(\log p^0(s_2) - \log p^2(s_2)\right)} \\
        &R_2 := \bfR_2(\s) = \frac{\partial_{s_2}\left(\log p^0(s_2) - \log p^1(s_2)\right)}{\partial_{s_2}\left(\log p^0(s_2) - \log p^2(s_2)\right)}.
        \end{split}
    \end{equation}
    Inutively, the above relation defines a structural equation model $\mathbf g: \bfS \mapsto \bfR$ where:
    \begin{equation*}
    \begin{split}
        &R_1 := \mathbf{g}_1(S_1, S_2)\\
        &R_2 := \mathbf{g}_2(S_2).
    \end{split}
    \end{equation*}
    The above is true unless, due to pathological algebraic cancellations, we have $\partial_{s_i} R_1(\s) = 0$ almost surely for $i=1$ or $2$. These are ruled out by \cref{ass:sufficient_variability} of sufficient variability.
    Hence, we obtain the ratio $R_2$ depending only on $S_2$, and $R_1$ depending on the noise vector $S_1, S_2$. Then, there is a structural equation model
    $$
    \bfX := \f(\bfS), \quad \bfR := \mathbf g(\bfS)
    $$
    specified by \cref{eq:ica} and \cref{eq:example_ratios},
    with the DAG of \cref{fig:measurement_model}. A straightforward application of the \cref{ass:faithfulness} of faithfulness over the structural equation models verifies that:
    \begin{align}
        &R_2 \indep X_1, \quad\! X_1 \sim  p^e_{X_1}\label{eq:r2_indep_x1} \\
        & R_1 \notindep X_2  \quad\! X_2 \sim p^e_{X_2},\label{eq:r1_notindep_x2}
    \end{align}
    We conclude that for generic SCMs, simply testing the \emph{unconditional} independence of $\bfR$ and $\bfX$ entries yields a criterion for identifying the causal order.
\end{example}

\begin{figure}
    \centering
    \begin{tikzpicture}[
    scale=0.7, 
    every node/.style={circle, inner sep=0pt, font=\small},
    s_node/.style={draw=black, fill=black, text=white, minimum size=0.6cm},
    xr_node/.style={draw=black, fill=white, text=black, minimum size=0.6cm},
    >=latex, 
    line width=0.5pt 
    ]
    
    \node[s_node] (s1) at (-2.2, 2.2) {$S_1$};
    \node[s_node] (s2) at (2.2, 2.2) {$S_2$};
    
    \node[xr_node] (r1) at (0, 1.2) {$R_1$}; 
    
    \node[xr_node] (x1) at (-2.2, 0) {$X_1$};
    \node[xr_node] (x2) at (0, 0) {$X_2$};
    \node[xr_node] (r2) at (2.2, 0) {$R_2$};
    
    \draw[->] (s1) -- (x1);
    \draw[->] (s1) -- (x2);
    \draw[->] (s1) -- (r1);
    \draw[->] (s2) -- (r1);
    \draw[->] (s2) -- (x2);
    \draw[->] (s2) -- (r2);
    
    \end{tikzpicture}
        \caption{Graph for the structural equation model of \cref{example:bivariate_sink}. $X_1, X_2, R_1, R_2$ are observable measurements of the latent variables $S_1, S_2$. The resulting graph is a DAG: each observed measurement is a function of its parent nodes in the structural equations defining the model. By d-separation, we have $R_1  \notindep_d X_2$ and $R_2 \indep_d X_1$, according to the analysis in the example, which distinguishes the causal order on the SCM over $X_1, X_2$.}
        \label{fig:measurement_model}
\end{figure}

The above example provides the key technical steps to our main result. 

\begin{restatable}[Formalization of \cref{thm:full_graph_identifiability}]{theorem*}{CdIdentifiability}
Consider the SCM of \cref{eq:ica}. Let \cref{ass:diffeomorphism} and \cref{ass:faithfulness} be satisfied. Then, given at least two auxiliary environments satisfying \cref{ass:sufficient_variability} of sufficient variability for each node, the causal graph is identifiable.
\end{restatable}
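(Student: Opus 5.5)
The plan is to generalize \cref{example:bivariate_sink} into an iterative sink-removal procedure. We identify a sink of $\mathcal G$ from the observed distributions, remove it, and recurse on the marginal SCM over the remaining variables. This yields a valid causal order. Since faithfulness (\cref{ass:faithfulness}) holds and the order is known, the edges then follow from standard conditional-independence tests along the order, i.e., $X_j \to X_i$ iff $X_j \notindep X_i \mid \mathbf X_{\text{pred}(i)\setminus\{j\}}$. The core task is therefore a sink criterion that needs only the three densities $p^0,p^1,p^2$.

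\textbf{Step 1 (score differences).} By the change-of-variables formula, the log-determinant term is environment-invariant and cancels, giving $\Delta(0,e,\x)=J^T_{\f^{-1}}(\x)\,\nabla\big(\log p^0(\s)-\log p^e(\s)\big)$. The factorization of $p^e_{\bfS}$ makes $\nabla_{\s}(\log p^0-\log p^e)$ componentwise: its $k$-th entry, call it $\delta^e_k(s_k)$, depends only on $s_k$. Hence $\Delta(0,e,\x)_i=\sum_k \partial_{x_i} s_k\,\delta^e_k(s_k)$. Because $J_{\f^{-1}}$ is triangular in the causal order, $\partial_{x_i}s_k\neq 0$ only if $k$ is $i$ or a descendant of $i$. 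If $X_l$ is a sink, only $k=l$ survives, and then $\partial_{x_l}s_l$ cancels in the ratio. Since \cref{ass:sufficient_variability}(i) makes the denominator nonzero a.s., we obtain $R_l=\delta^1_l(s_l)/\delta^2_l(s_l)$, a function of $S_l$ alone.

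\textbf{Step 2 (sink criterion).} Consider the joint SEM $\bfX=\f(\bfS)$, $\bfR=\mathbf g(\bfS)$ with the bipartite DAG from the $S$'s to the measurements. For a sink $l$, $R_l$ has parent set $\{S_l\}$. For every $j\neq l$, $X_j$ is a function of its ancestors' sources, which exclude $S_l$. So $R_l\indep X_j$ for all $j\neq l$, a d-separation statement that holds exactly by independence of the sources. For a non-sink $i$, the goal is to show $R_i\notindep X_j$ for some $j\neq i$. Here I would invoke \cref{ass:sufficient_variability}(ii). Writing $R_i=N/D$ with $N=\Delta(0,1)_i$ and $D=\Delta(0,2)_i$, we have $\partial_{s_j}R_i=(D\,\partial_{s_j}N-N\,\partial_{s_j}D)/D^2$. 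This vanishes exactly when $(N,D)$ and $(\partial_{s_j}N,\partial_{s_j}D)$ are collinear. Noncollinearity on a positive-measure set therefore means $R_i$ genuinely depends on some $S_j$ with $j\neq i$, which must be a descendant's source. This gives an edge $S_j\to R_i$ in the minimal SEM graph. That $S_j$ also feeds $X_j$, so $R_i$ and $X_j$ are d-connected through $S_j$. Faithfulness (\cref{ass:faithfulness}) then upgrades d-connection to $R_i\notindep X_j$. One caveat: $\partial_{s_j}$ should be read on $\Delta$ as a function of $\s$ via $\x=\f(\s)$.

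\textbf{Step 3 (observability and recursion).} $\bfR$ is computable from $(p^e_{\bfX})_e$ as a function of $\x$, so the criterion "$R_i(\bfX)\indep X_j$ for all $j\neq i$" is testable under $p^0$. It selects exactly the sinks. To recurse, marginalize out the sink. The remaining variables form an SCM $\bfX_{-l}=\tilde\f(\bfS_{-l})$ that is still a diffeomorphism (a triangular sub-block of $\f$) with invariant mechanisms, and the score differences of the marginals are again of the form in Step 1. We need the variability assumption to hold for the marginal SCMs as well; this is what "for each node" in the statement is meant to guarantee.

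\textbf{Main obstacle.} I expect the main obstacle to be the non-sink direction of Step 2. Translating the analytic noncollinearity condition into a dependency edge $S_j\to R_i$ that faithfulness can act on is delicate. It requires checking that the chosen $S_j$ is not "used up" by a cancellation between the $X_j$ side and the $R_i$ side. My first attempt would be to pick $j$ as a child-source of $i$, and to apply faithfulness to the two-node marginal $(R_i,X_j)$ in the augmented DAG, with faithfulness understood on that augmented graph.
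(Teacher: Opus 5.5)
Your proposal is correct and follows essentially the same route as the paper: the score-ratio sink criterion of \cref{prop:sink_ratio}, followed by recursive sink removal and recovery of the edges from the order. In that criterion, sinks give $R_l$ depending on $S_l$ alone, and non-sinks are handled by turning noncollinearity into an edge $S_j \to R_i$ and applying faithfulness on the augmented DAG.

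Two small remarks. Your caveat that sufficient variability (and faithfulness) must hold for each marginalized model in the recursion is something the paper only asserts. Also, the $j$ supplied by \cref{ass:sufficient_variability}(ii) need not be a descendant of $i$, because entries such as $\partial_{x_i}s_k$ also depend on ancestors' sources. This is harmless: any $j\neq i$ with $S_j \to R_i$ is d-connected to $X_j$ through $S_j$, so you do not need to choose a child-source.
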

The proof is given in \cref{app:cd_proof}.

\begin{callout}
    \paragraph{Discussion.} Under suitable assumptions, \textit{only} two auxiliary environments are sufficient to identify the causal graph of an SCM with arbitrary bijective mechanisms. This is in stark contrast with multi-environment identifiability results in ICA, where the number of environments necessarily linearly scales with the number of nodes. Similar findings were recently presented in \citet{montagna2026on}, however, under the restrictive assumption of Gaussianity of the SCM noise terms and on the nature of the environments.
\end{callout}
The above theorem, while presenting (to the best of our knowledge) the most general identifiability result in the theory of causality with SCMs, is under the constraint that all noise terms are subject to shifts in at least two environments (\cref{ass:sufficient_variability}).  From a theoretical perspective, this is acceptable, as studying the conditions of full graph identifiability is a well-known problem of interest, and widely researched in the literature \citep{shimizu2006alinear,hoyer2008anm,zhang2009pnl,immer2022on,montagna2026on}; however, practical causal discovery should consider what is identifiable of the graph given a set of datasets under the least possible assumptions about their generating process. In the appendix, we show that violations of \cref{ass:sufficient_variability} can be detected from the data---specifically, we can verify if some of the latent variables are not subject to interventions from statistics of the observed causal variables. Characterizing the nature of the equivalence class that is still identifiable when only a subset of latents are subject to variation in the auxiliary environments is an important research direction that we open to future work.

\paragraph{Out of distribution generalization.}Given $p_{\bfX}$ and the causal graph $\mathcal G$, Pearl's backdoor adjustment provides a recipe for provably correct inference at the interventional level of the causal ladder  \citep{pearl2009causality}. Hence, given the result in \cref{thm:full_graph_identifiability}, two auxiliary environments allow for trustworthy  OOD generalization over data generated under interventions on the SCM. Next, we show some consequences that are even more striking: the DAG assumption, a strong yet widely adopted restriction in causal discovery, is enough to elevate these conclusions to the counterfactual level and beyond.


\subsection{ICA identifiability and counterfactual inference}\label{sec:ica}
Consider the SCM of \cref{eq:ica}, that is, an ICA model with Jacobian that can be triangularized by a permutation matrix. Under the assumptions of diffeomorphism and faithfulness, we can show that $\f$ is identifiable in the ICA sense, i.e. up to elementwise reparametrization of the sources. This is a consequence of the fact that the DAG assumption implies strong constraints on the form of $\f$, which is enough to elevate identifiability in the sense of causal discovery to identifiability in the ICA sense.

\begin{restatable}[Formalization of \cref{thm:ica_identifiability}]{theorem*}{IcaIdentifiability} Consider the ICA model \cref{eq:ica} for $\bfX = \f(\bfS)$. Let Assumptions \ref{ass:diffeomorphism} and \ref{ass:faithfulness} to hold. Further, assume that the domain of $\mathbf f: \mathcal S \to \mathcal X$ is connected. Then, given at least two auxiliary environments satisfying \cref{ass:sufficient_variability} $\f$ is identifiable in the ICA sense, i.e. up to an invertible and elementwise reparametrization of the sources.
\end{restatable}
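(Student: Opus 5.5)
We first reduce to the case where the causal graph $\mathcal G$, and hence a valid causal order, is known. This is exactly the content of the formalized \cref{thm:full_graph_identifiability}: under \cref{ass:diffeomorphism}, \cref{ass:faithfulness} and \cref{ass:sufficient_variability}, the collection $(p^e_{\bfX})_{e=0}^{2}$ determines $\mathcal G$. We then fix a permutation compatible with $\mathcal G$ and relabel so that $J_{\f}$ is lower triangular. Now let $\tilde\f$ be any other diffeomorphism with independent sources $\tilde\bfS^e$ reproducing all observed distributions. Applying the same theorem to $\tilde\f$ shows that its graph coincides with $\mathcal G$. Hence $J_{\tilde\f}$ is lower triangular in the same order, and with the correct parent sets.

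\textbf{Step 1: the comparison map is triangular.} Define $\mathbf h := \tilde\f^{-1}\circ\f$, so that $\tilde\bfS^e \overset{d}{=} \mathbf h(\bfS^e)$ for every $e$. Both $\f^{-1}$ and $\tilde\f^{-1}$ are lower triangular in the same order, because $S_i$ is a function of $(X_{1},\dots,X_i)$ only. Their composition $\mathbf h$ is therefore also lower triangular: $\tilde s_i = h_i(s_1,\dots,s_i)$. We want to show that $\partial_{s_j} h_i \equiv 0$ for $j<i$, which makes $\mathbf h$ elementwise.

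\textbf{Step 2: the off-diagonal derivatives vanish.} We use the ratio construction of \cref{example:bivariate_sink}. Write $\delta^e_i(s_i) := \partial_{s_i}(\log p^0(s_i)-\log p^e(s_i))$. Since $\mathbf h$ is a diffeomorphism between two product-density models in every environment, we obtain
\[
\delta^e(\s) = J_{\mathbf h}^T(\s)\,\tilde\delta^e(\mathbf h(\s)), \qquad e=1,2,
\]
where $\tilde\delta^e$ is defined analogously for $\tilde\bfS$. Because $J_{\mathbf h}$ is triangular, the last coordinate satisfies $\delta^e_d(s_d)=\partial_{s_d}h_d\,\tilde\delta^e_d(h_d)$. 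Taking the ratio over $e=1,2$, which is well defined by \cref{ass:sufficient_variability}(i), removes the Jacobian factor. It also shows that $\tilde\delta^1_d/\tilde\delta^2_d$ evaluated at $h_d(s_1,\dots,s_d)$ depends only on $s_d$.

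We then argue inductively from the sinks upward. Suppose $h_i$ depended on some $s_j$ with $j<i$. Then the independence of $\tilde S_i$ from $\tilde S_j=h_j(\dots)$, combined with the non-collinearity in \cref{ass:sufficient_variability}(ii), leads to a contradiction: the $2\times 2$ system in $(\partial_{s_j}h_i,\partial_{s_i}h_i)$ built from the two environments has a unique solution, and this solution forces $\partial_{s_j}h_i=0$ on a positive-measure set. Analytic or continuity arguments then extend this to almost everywhere. Alternatively, the same conclusion follows by faithfulness (\cref{ass:faithfulness}) applied to the augmented model $(\bfX,\bfR)$: the $\tilde R_i$ defined from $\tilde\f$ must be independent of the same observed variables as $R_i$, which pins down each $\tilde S_i$ as a function of $S_i$ alone.

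\textbf{Step 3: from local to global.} Step 2 gives $\partial_{s_j}h_i=0$ almost everywhere, and hence everywhere by continuity. Since $\mathcal S$ is connected, $h_i$ depends only on $s_i$ globally. The connectivity assumption in the statement is used exactly here, because without it $h_i$ could be piecewise defined on different components. Invertibility of $\mathbf h$ then gives invertible scalar maps $h_i$. Therefore $\tilde\f=\f\circ\mathbf h^{-1}$, which is identifiability up to invertible elementwise reparametrization.

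The main obstacle is Step 2. Turning the sufficient variability condition, which is stated in terms of $\Delta$ and its $s_j$-derivatives, into the vanishing of the off-diagonal entries of $J_{\mathbf h}$ requires care. We would try the direct $2\times2$ linear-algebra route first, differentiating the ratio identity with respect to $s_j$, and fall back on the faithfulness argument if that route fails.
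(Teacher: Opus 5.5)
Your reduction to a common lower-triangular order and your Step 1 agree with the paper. Step 2, however, is the only substantive step, and it is not proved; the route you sketch does not close. Coordinatewise, your score identity reads $\delta^e_k(s_k)=\sum_{i\ge k}\partial_{s_k}h_i(\s)\,\tilde\delta^e_i(h_i(\s))$. For each $k$ this gives two equations, one per auxiliary environment, in the $d-k+1$ unknown derivatives $\partial_{s_k}h_k,\dots,\partial_{s_k}h_d$, and the functions $\tilde\delta^e_i$ are unknown as well. So there is no $2\times 2$ system in $(\partial_{s_j}h_i,\partial_{s_i}h_i)$; these two quantities do not even appear in the same equation. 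Where the counts happen to match, nothing forces the off-diagonal component of the solution to vanish.

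\cref{ass:sufficient_variability}(ii) constrains the true model's $\Delta$ and is what makes $R_i$ depend on some $S_j$ in \cref{prop:sink_ratio}. It says nothing about $J_{\mathbf h}$. Your sink base case also stalls. Knowing that $(\tilde\delta^1_d/\tilde\delta^2_d)(h_d(\s))$ depends only on $s_d$ gives $h_d=h_d(s_d)$ only if that ratio is injective, which is not assumed. Moreover, showing $\partial_{s_j}h_i=0$ on a positive-measure set does not contradict dependence on $s_j$. It also cannot be extended to almost everywhere without analyticity, which is not assumed. The faithfulness fallback fails for a related reason. $R_i$ is an observable common to both models, but for a non-sink it depends on several sources, so its independence pattern cannot tie $\tilde S_i$ to $S_i$ alone.

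The missing idea is that once the order is fixed, no further environment information is needed, provided you induct from the sources rather than from the sinks. Suppose $\tilde S_j=h_j(S_j)$ with $h_j$ invertible for all $j<i$. Then $\tilde S_i\indep\tilde{\bfS}_{1:i-1}$ upgrades to $\tilde S_i\indep\bfS_{1:i-1}$. Since also $S_i\indep\bfS_{1:i-1}$, for almost every fixed $c$ the random variable $h_i(c,S_i)$ has one and the same law, namely that of $\tilde S_i$. The maps $s_i\mapsto h_i(c,s_i)$ are strictly monotone, and by connectedness of the domain they all share one orientation. A monotone map between two fixed one-dimensional laws is unique (\cref{lem:monotone_scalar_transport}), so $h_i(c,\cdot)$ does not depend on $c$. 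Continuity then extends this from almost every $c$ to all $c$. This is also where the paper actually uses connectedness: to fix the orientation uniformly in $c$, not in a final local-to-global step as in your Step 3.
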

A sketched and complete proofs are presented in \cref{app:ica_proof}.

An interesting consequence (independently found by \citet{nasr2023counterfactual}, assuming the graph is given) is that identifiability in the sense of ICA implies generalization at the counterfactual level. 
\begin{restatable}[Formalization of \cref{thm:counterfactual}]{corollary*}{CorCounterfactual}
    Consider the ICA model \cref{eq:ica} for $\bfX = \f(\bfS)$. Let Assumptions \ref{ass:diffeomorphism} and \ref{ass:faithfulness} to hold, and assume that the domain of $\mathbf f: \mathcal S \to \mathcal X$ is connected. Then, the SCM is counterfactually identifiable from the observational distribution $p_{\bfX}$ and two auxiliary environments satisfying \cref{ass:sufficient_variability}.
\end{restatable}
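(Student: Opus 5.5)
The plan is to reduce counterfactual identifiability to the ICA identifiability already established in the formalization of \cref{thm:ica_identifiability}. We then check that the elementwise reparametrization indeterminacy left by that theorem does not change any counterfactual quantity. Recall that a counterfactual query is answered in three steps (abduction, action, prediction) \citep{pearl2009causality}. Given a factual observation $\x$, we recover $\s = \f^{-1}(\x)$. We then modify the structural equations of the intervened nodes, and finally push $\s$ through the modified mechanisms. It therefore suffices to show that any two models $(\f, p^0_{\bfS})$ and $(\hat\f, \hat p^0_{\bfS})$ that agree on the observed distributions $(p^e_{\bfX})_{e=0}^{2}$ produce the same counterfactual output for every $\x$ and every intervention.

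The first step is to invoke the ICA identifiability theorem. Under Assumptions \ref{ass:diffeomorphism}, \ref{ass:faithfulness}, connectedness of $\mathcal S$, and two auxiliary environments satisfying \cref{ass:sufficient_variability}, it gives
\[
\hat\f = \f \circ \mathbf h^{-1},\qquad \mathbf h(\s) = (h_1(s_1),\dots,h_d(s_d)),
\]
with each $h_i$ invertible. By \cref{thm:full_graph_identifiability} the causal graph, and hence the permutation, is also fixed. Any residual permutation in the ICA indeterminacy can therefore be absorbed by matching sources to nodes through the identified causal order, so that $\hat S_i$ is paired with $X_i$.

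The second step is to translate $\f$ into structural equations. Since $P J_{\f} P^T$ is triangular, recursive substitution yields $X_i = F_i(\mathbf X_{\Parents_i}, S_i)$ with $F_i(\cdot, s_i)$ determined by $\f$. Here $F_i(\x_{\Parents_i}, \cdot)$ is invertible in $s_i$ by the diffeomorphism assumption. For the alternative model this gives $\hat F_i(\x_{\Parents_i}, \hat s_i) = F_i(\x_{\Parents_i}, h_i^{-1}(\hat s_i))$.

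The third step is to compare the two models step by step. Abduction in the alternative model gives $\hat s_i = h_i(s_i)$, because $\hat\f^{-1} = \mathbf h\circ \f^{-1}$. Under an intervention $do(X_k = a)$ (or a soft/mechanism change on the non-intervened nodes), the prediction step evaluates each non-intervened node as
\[
\hat F_i(\x'_{\Parents_i}, \hat s_i) = F_i(\x'_{\Parents_i}, s_i).
\]
We proceed by induction along the causal order: if the parents' counterfactual values coincide in the two models, then so does the value of $X_i$. The counterfactual outcome is therefore the same in both models for each $\x$. Integrating against the common posterior over $\x$ (abduction is deterministic given $\x$) identifies the full counterfactual distribution from $p_{\bfX}$ together with the two auxiliary environments.

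The main obstacle is the bookkeeping of the indeterminacy. One must ensure the ICA theorem really delivers a reparametrization that is elementwise in the \emph{same} indexing as the causal nodes, i.e. with no leftover permutation or mixing across sources. Triangularity of $J_{\hat\f}$ in the same order is what forces this, and I would argue it via the identified causal order of \cref{thm:full_graph_identifiability}. A secondary subtlety is that the connectedness of $\mathcal S$ is needed so that the $h_i$ are global (not piecewise) maps. It should be stated that counterfactual queries are restricted to $\x$ in the support $\mathcal X$, so that abduction through $\f^{-1}$ is well-defined.
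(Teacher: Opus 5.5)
Your proposal is correct and follows essentially the same route as the paper's proof: invoke the ICA identifiability theorem to get $\widehat{\f} = \f \circ \mathbf h^{-1}$ with $\mathbf h$ componentwise (in the node indexing fixed by the identified causal order), observe that abduction yields $\hat s_i = \mathbf h_i(s_i)$ so that $F_i(\mathbf x_{\Parents_i}, \mathbf h_i^{-1}(\hat s_i)) = F_i(\mathbf x_{\Parents_i}, s_i)$, and conclude by induction along the causal order. The one point to tighten is your parenthetical about soft interventions: the argument only covers interventions whose new assignments do not use the exogenous noise directly (e.g.\ hard interventions, or shifts applied to the output of $F_k$), because a noise-dependent assignment is not invariant to the reparametrization $\mathbf h_k$; the paper's proof implicitly assumes the same restriction.
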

Proof can be found in \cref{app:cor_proof}. For details on counterfactual identifiability, we refer to \citet{peters2017elements} (Section 3.3) and \citet{pearl2019primer} (Section 4.2).

\paragraph{Discussion and summary of the theory.} Our theory supports surprisingly strong conclusions. \textit{(i)} from a causality perspective, for acyclic and invertible SCMs \textit{two} auxiliary environments are sufficient for out-of-distribution generalization to the counterfactual level of the ladder of causation. This is remarkable under the lenses of \textit{natural experiments}: passively observing \textit{only} two sufficiently different treatment groups is enough to correctly infer the effect of arbitrary treatment in the system of interest (note that we only require changes in the treated populations; the treatment type can be invariant between the auxiliary environments); moreover, it is sufficient to \textit{imagine} the effect of counterfactual treatments. Clearly, given the strength of this conclusion, one should be careful in imposing the  DAG-ness and causal sufficiency assumptions, which are responsible for the heavy-weight lifting in elevating observational information to the counterfactual level. \textit{(ii)} from a representation learning perspective, ICA models with triangular monotonic maps are identifiable from \textit{only} two environments. This is remarkable in the light of previous literature: without the triangularity constraint on the Jacobian $J_\f$ of \cref{eq:ica}, it is known that identifiability can be achieved only with a number of environments that scales with the number of nodes. Also in this case, the strong assumption on the structure of the Jacobian is responsible for this result, which remains nevertheless impressive being ICA a notably hard unsupervised-learning task.


\section{Multi-environment causal discovery from data: algorithm and experiments}
\label{sec:algorithm}
\looseness-1In this section, we translate our theory on the identifiability of graphs from 3 or more environments into a practical algorithm for inference; the goal is to validate our theory on synthetically generated data. 

\subsection{The MCD algorithm}
We propose a new method for causal discovery that can leverage multiple datasets from different environments, that we call MCD (Multienvironment Causal Discovery). Intuitively, our approach recursively finds sink nodes based on the logic illustrated in \cref{example:bivariate_sink} and adopted in the proof of our main theorem (\cref{thm:full_graph_identifiability}). A word of caution: our method mostly serves the purpose of empirically validating our theoretical claims, rather than being a serious proposal for causal discovery. In fact, as the logic of the algorithm follows that of the proof of \cref{thm:full_graph_identifiability}, it is not optimized for empirical performance, and we leave the effort of developing new, accurate methods based on our theory for future work. However, our synthetic experiments show that MCD can achieve remarkable performance, inferring the causal order better than random and than performative baselines for observational causal discovery, both on linear Gaussian data and on SCMs with arbitrary nonlinear mechanisms and up to $20$ nodes. 

\paragraph{A recursive approach to causal order estimation.} \looseness-1The focus of our method is finding the causal order. Once that is given, it is known how to recover the causal graph, without the need of special modelling assumptions. Given a sink-identification criterion, we can recursively find the causal order. For example, consider the graph $X_3 \leftarrow X_1 \rightarrow X_2$. With enough samples an a correct sink-identification criterion, we find that $X_2$ is a sink (if we were to find $X_3$ sink, the argument would be perfectly symmetric). Then, we know the position of $X_2$ in the causal order, and we can safely remove it from the graph. Next, we run the sink-identification criterion on $X_1 \to X_3$, and expect to find that $X_3$ is a sink. So, we know that the causal order $X_1, X_3, X_2$ is a valid one. This idea is central to our method. To identify sink nodes, we borrow from the logic of \cref{example:bivariate_sink}: given the vector of ratios $\mathbf R$, we know that $R_i \indep \mathbf X_{-i} \iff $ node $i$ is a sink (i.e., the $i^\textnormal{th}$ component of the ratio vector is independent of all causal variables except for $X_i$). In practice, however, the ratio is numerically unstable, which is why we replace it with a theoretically equivalent but algorithmically friendlier alternative. This, and a sketch of the workflow of MCD are presented in \cref{alg:mcd}. 

\looseness-1The key statistical estimation steps (involving score estimation) and MCD's  algorithmic complexity are presented in \cref{app:algorithm}. Next, we discuss the empirical performance of our method on synthetic data. We compare to several algorithms from the observational causal discovery literature.

\begin{algorithm}[t]
\small
\caption{MCD: recursive sink peeling from score ratios}
\label{alg:mcd}
\DontPrintSemicolon
\begin{spacing}{1.1}
\KwData{Datasets $\mathcal D^e=\{\mathbf x_n^e\}_{n=1}^{N_e}$, $e=0,\dots,E$; auxiliary environments split $\set{1,...,E}=\mathcal E_1\cup\mathcal E_2$}
\KwResult{Estimated causal order $\widehat \pi$}

$\mathcal A \leftarrow \set{1,...,d}$ \tcp*[r]{\footnotesize{Active variables}}
$\widehat \pi \leftarrow ()$\;

\While{$|\mathcal A|>0$}{
 $\widehat{\mathbf g}^e(\mathbf x_{\mathcal A})
    \approx
    \nabla_{\mathbf x_{\mathcal A}}
    \log p^e(\mathbf x_{\mathcal A}),
    \qquad e=0,\dots,E.$ \tcp*[r]{\footnotesize{Estimate the scores}}

    \medskip
${\Delta}^{(\ell)}(\mathbf x_{\mathcal A})
    \leftarrow
    \widehat{\mathbf g}^{0}(\mathbf x_{\mathcal A})
    -
    \sum_{e\in\mathcal E_\ell}
    \widehat{\mathbf g}^{e}(\mathbf x_{\mathcal A}),
    \qquad \ell=1,2.
    $ \tcp*[r]{\footnotesize{Compute score differences}}

    \medskip
    $\mathbf U \leftarrow \left[(\Delta^{(1)}_i, \Delta^{(2)}_i) \operatorname{ for } i \in 1,...,d  \right]$
    \tcp*[r]{\footnotesize{Statistics (stabler than ratio)}}

    \medskip
    $\mathbf U \leftarrow \left[\frac{\mathbf U_i}{||\mathbf{U}_i||} \operatorname{ for } i \in 1,...,d  \right]$ 
    
    \medskip
    $\widehat s = \arg \min_{i \in \mathcal A} \operatorname{HSIC}(\mathbf U_i, \x_{\mathcal A \setminus \set{i}})$ \tcp*[r]{\footnotesize{HSIC returns p-value; argmin returns sink index}}

    \medskip
    $\widehat \pi \leftarrow (\widehat s,\widehat \pi)$ \tcp*[r]{\footnotesize{Prepend the newly found sink}}

    \medskip
    $\mathcal A \leftarrow \mathcal A\setminus\{\widehat s\}$ \tcp*[r]{\footnotesize{Remove sink from active nodes}}
}

\Return{$\widehat \pi$}
\end{spacing}
\end{algorithm}

\subsection{Experiments: validation of theoretical claims and benchmarking}\label{sec:experiments}

In this section, we analyse the performance of MCD on synthetic data generated by SCMs notably non-identifiable from pure observations. 

\paragraph{Synthetic data generation.} \looseness-1Data are synthetically generated as follows: we consider Erdos-Renyi randomly generated graphs \citep{erdos1960on}, a common choice for causal discovery benchmarking, with $3,5,10,20$ nodes. For each pair of nodes in the graph, we have a probability of attachment (drawing an edge between the pair) of $\set{0.8, 0.5, 0.25, 0.25}$ (different values for all the graphs' dimensions). Noise terms follow a Gaussian distribution, with mean sampled in the interval $[-2, 2]$ and standard deviation in $[0.5, 4]$, which we allow to shift between environments. For each structural equation $X_i := F_i(\bfX_{\Parents_i, S_i})$, with uniform probability the function $F_i$ is either a linear transformation (resulting in a linear Gaussian SCM, notoriously non-identifiable from observations), or, else, a random nonlinearity: we sample the weights of a 4-layers Residual Neural Network from a standard Gaussian $\mathcal N(1,1)$, with $\tanh$ activations and normalization at the last layer, so that $X_i = \operatorname{ResNet}_i(\mathbf{X}_{\Parents_i}, S_i)$---the source and parents are concatenated in a unique input vector.  \cref{fig:resnet} displays some of the random nonlinear mechanisms we sampled. Datasets consists of $3000$ observations per environment. In \cref{app:statistical_efficiency} we empirically analyse the statistical efficiency of the method.

\paragraph{Metric.} \looseness-1As the focus of MCD is finding the causal order of the graph, we adopt the well-established topological divergence $D_{\textnormal{top}}$ \citep{rolland2022score}. It counts the number of edges a fully connected graph would miss due to a mistake in inferring the causal order. E.g., for a graph $X_1 \to X_2 \to X_3$, an estimated causal order $\set{1,3,2}$ would correspond to $D_{\textnormal{top}}=1$, because the edge $X_2 \to X_3$ is forbidden by such ordering. In our plots, we normalize the divergence by the total number of edges in the graph, so that it ranges from $0$ (best) to $1$ (worst).

\begin{figure}
    \centering
    \includegraphics[width=0.75\linewidth]{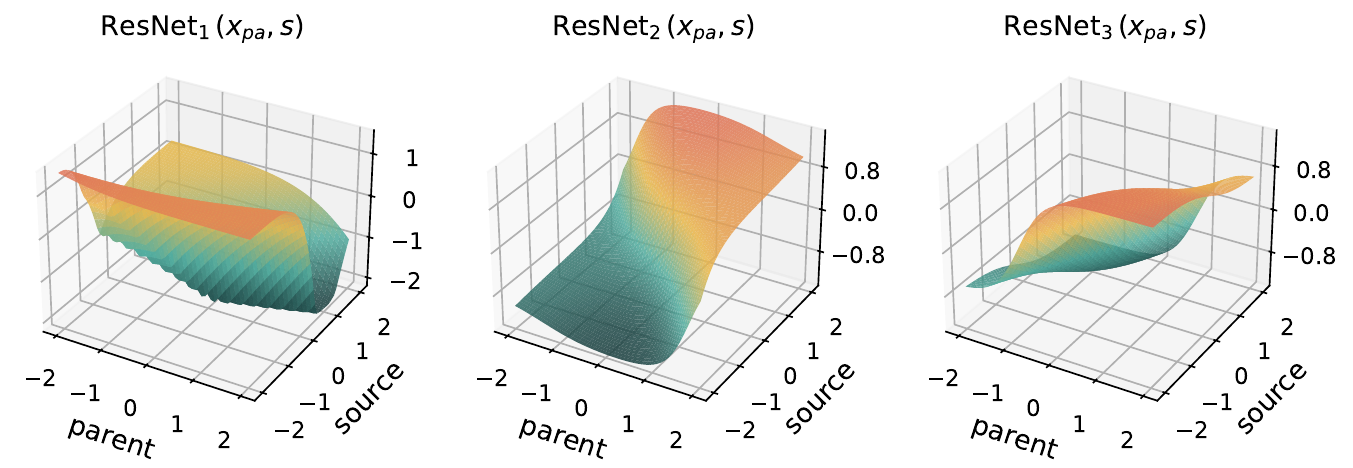}
    \caption{\looseness-1Samples of the random nonlinear mechanisms (one parent, one source) sampled from a residual neural network, adopted in our synthetic experiments. The function is nonlinear in the sources and sometimes even non-invertible (left figure), meaning we stress-test MCD beyond its assumptions.}
    \label{fig:resnet}
\end{figure}

\begin{figure}
    \centering
    \includegraphics[width=0.6\linewidth]{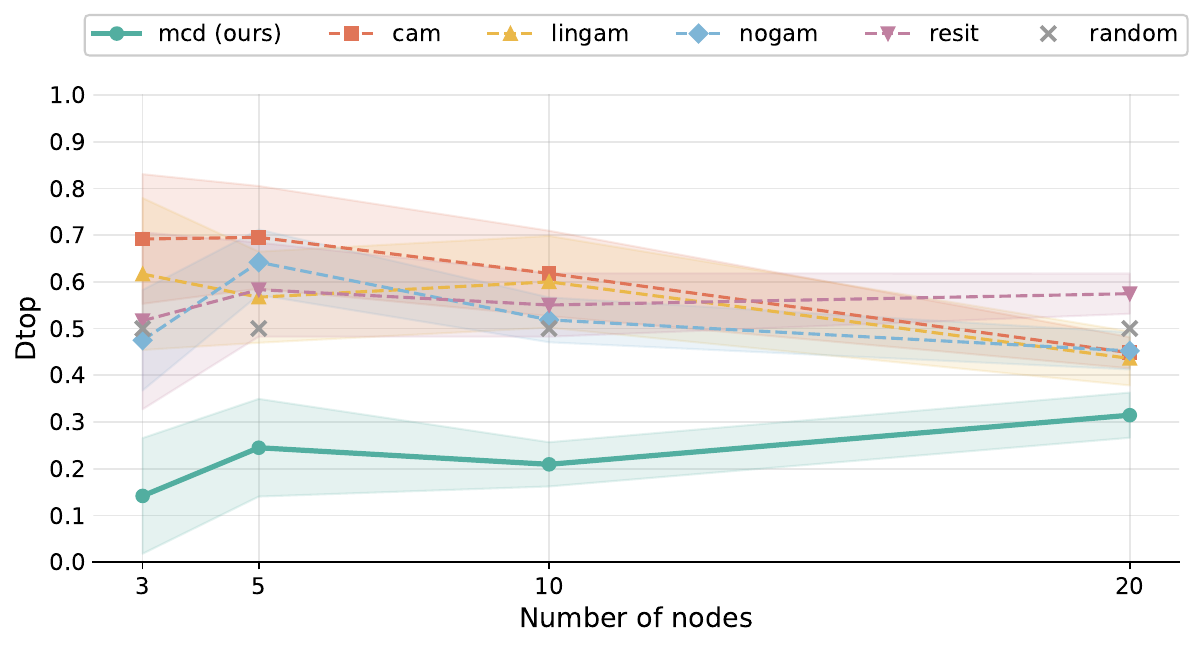}
    \caption{Mean $D_{\textnormal{top}}$ ($20$ seeds, the lower the better) performance on linear and nonlinear Gaussian data that are non-identifiable from pure observations. Error is $95\%$ confidence interval. Given three environments, MCD can infer the causal direction where all other state-of-the-art methods for observational causal discovery are no better than a baseline randomly selecting the causal order.\vspace{-1em}}
    \label{fig:main_experiments}
\end{figure}

\paragraph{Discussion of the experimental results.} In \cref{fig:main_experiments}, we consider the experimental performance of MCD on the synthetic data we described, given three environments. (Experiments with more environments are reported in  \cref{app:environments_ablation}.)  For the score estimation (which requires solving a kernel-ridge regression)  we use $0.001$ as weight of the $\ell_2$ regularization. We compare the performance of MCD with well-known and state of the art methods for causal discovery with additive noise models: CAM \citep{buhlmann2013cam}, RESIT \citep{peters2014causal}, LiNGAM \citep{shimizu2006alinear}, and NoGAM \citep{montagna23nogam} \footnote{For CAM and NoGAM we use the dodiscover implementations \citep{Li_Dodiscover_Causal_discovery} from \citet{montagna2023assumption}; RESIT and LiNGAM Python implementations are in the pip lingam package.}. Our findings are remarkable. While all other methods are no better than a baseline which uniformly samples a causal order at random, MCD achieves normalized $D_{\textnormal{top}}$ values (the lower, the better) ranging between $\sim 0.1$ for $3$ nodes, meaning the method perfectly infers the order over almost all seeds, to $\sim 0.3$ on $20$ nodes, remaining far better than random with statistical significance. We conclude the following:
\vspace{-.25em}
\begin{callout}
    \looseness-1The empirical results supports the theoretical claim that the number of environments doesn't need to scale with the number of nodes. Moreover, MCD's accuracy on graphs with up to $20$ nodes is a surprising achievement, given that scaling causal discovery with multiple environments beyond the bivariate setting is a well-known, unaddressed challenge, already found in \citet{monti2019nonsens,reizinger2023jacobianbased,montagna2026on}.
\end{callout}

\section{Conclusion}
\looseness-1The theory of our work shows that for structural causal models that are notoriously non-identifiable from observational data, \textit{only} three environments (with the lens of natural experiments, a control group and two treatment groups emerging in passively collected data) with invariant causal mechanisms are all it takes to uncover the direction of causal relations. Moreover, leveraging the acyclicity assumption, a common requirement in causality, we demonstrate that finding the causal graph is equivalent to solve a nonlinear ICA problem. This is  an impressive finding, knowing that general nonlinear ICA is a notoriously hard unsupervised learning problem (thought to be unsolvable until \citet{hyvarinen2016time}): our result is the first showing sufficient conditions for its solution  with a constant (and low, only $2$) number of auxiliary environments. Finally, we demonstrate that, when the causal model is bijective, having access to only three environments (i.e., naturally performing \textit{only} $2$ interventions on the latent sources) is sufficient to draw provably correct conclusions at the counterfactual level, the highest rung in the ladder of causality. Our MCD (Multienvironment Causal Discovery) method empirically validates the theoretical claims of graph identifiability on synthetic data. We leave for future work finding strong methodologies that leverage our theory for real-world causal discovery.

\section{Acknowledgments}
FM acknowledges the Chan Zuckerberg Initiative for financial support.



\bibliographystyle{plainnat}
\bibliography{biblio}
\newpage
\appendix

\section{Limitations}
The main limitation that we find in our work are of practical  nature. As discussed in the main paper, the assumption that all latent variables are subject to distribution shift can be strong for large graphs. Similarly, the assumption of causal sufficiency (no latent confounders) is a significant restriction, despite common in theoretical studies. Moreover, our MCD method is tested limited to synthetic data. Despite the promising performance, it would be interesting to have access to multi-environment real world data to evaluate its performance. 

\section{Related works}
Several works have tackled the problem of identifiability from multiple environments under the invariance principle, both in the causal discovery and in representation learning (ICA and causal representation learning). Below, we summarize the literature and how it relates to our works.

\paragraph{Observational causal discovery with SCMs.} Several works investigated the identifiability of causal graphs from pure observations: \citet{shimizu2006alinear,hoyer2008anm} focused on linear and nonlinear additive noise models; \citet{zhang2009pnl,immer2022on} on more or less slight generalizations. The burden of all these methodologies is that they require very strong restrictions on the class of structure causal models that must have generated the data.

\paragraph{Causal discovery with multiple environments.} Causal inference with the invariance principle was popularized by \citet{peters2015invariant} first (on linear models) and \citet{heinze2018invariant} later (on nonlinear additive noise models). Both methods address the problem of \textit{(i)} locally identifying the causal direction and \textit{(ii)} learn a regression model in the causal direction that is robust to distribution shifts due to interventions. The idea of their work is quite aligned to ours, despite being significantly more limited in terms of the class of SCMs they consider (additive noise models), while more expanded on the statistical analysis. In the context of causal discovery with structural causal models, \citet{rothenhausler2015backshift,ghassami2018multidomain,heurtebise2025identifiablemultiviewcausaldiscovery} study identifiability of causal graphs with multiple environments, under the assumption of linear models. \citet{montagna2026on} extends these ideas to arbitrary nonlinear models, with the requirements of Gaussian sources: for the causal discovery theory part, this is probably the work closest to ours. The papers of \citet{huang2020heterogeneous} and \citet{perry2022causal} are also concerned with causal discovery and multiple environments under some invariant principle: the main difference is that they do not assume a functional models of causality; as a result, they can only guarantee identifiability as the number of  auxiliary datasets  go to infinity. All these results can be viewed in the light of classic literature as causal discovery and inference under with natural experiments \citep{titiunik2021natural}. An interesting approach to causal discovery with some overlap to ours is \citet{mooij2011cyclic}: they show how to integrate interventional and observational information for causal discovery.

\paragraph{Causal discovery from interventions.} Several empirical and theoretical works investigates causal discovery with hard interventions. \citet{eberhardt2005on} seminal paper upper bounds the number of experiments (in terms of hard assignments) needed to recover the causal graph, in terms of the number of nodes. \citet{yang2018characterizing,brouillard2020differentiable,jaber2020causal} characterize equivalence classes of causal graphs one can learn from interventional data. From a practical perspective, \citet{brouillard2020differentiable,ke2023learning} introduce neural architectures for  causal discovery with interventions. 

\paragraph{Independent component analysis.} The ICA problem was introduce by \citet{comon1994ica} in their seminal paper. Beyond linearity, ICA was generally thought to be non-solvable; some narrow identifiability results have been presented in \citet{hyvarinen1999conformalica}---specifically, they showed that if the mixing function is conformal it can be recovered up to a rotation indeterminacy. But general nonlinear ICA identifiability has not been shown until \citet{hyvarinen2016time} with a multi-environment-based approach (in their work, time intervals serve as the environment label). These results sparked an entire lien of research at the intersection between ICA identifiability and the invariance principle \citep{hyvarinen2017nonlinear,hyvarinen2019nonlinear,gresele2019incomplete,khemakhem2020icebeem,khemakhem2020variational,halva2020hidden,halva2021snica,lachapelle22disentanglement}. Moreover, given that causal discovery with structural causal model is a simplified version of the ICA problem \citep{montagna2026on}, \citet{monti2019nonsens,reizinger2023jacobianbased} directly translated these findings showing that causal discovery can be solved with multiple environments: however, their results require a number of auxiliary datasets that scale at least linearly with the number of nodes in the graphs, making their guarantees not appealing, in practice. Moreover, their proposed methodologies fail to perform well on graphs with more than two nodes, even on synthetically generated data. 

\paragraph{Causal representation learning.} The field of causal representation learning \citep{scholkopf21towards} strongly focused on analyzing the assumptions under which theoretical guarantees of identifiability can be provided. Most of these results rely on variations of our multi environment setting (e.g., see \citet{squires2020active,buchholz2023learning,varici2025score}). \citet{yao2025unifying} nicely presents these works in a unified framework.


\section{Proof of theoretical results}
\subsection{Proof of Theorem \ref{thm:full_graph_identifiability}}\label{app:cd_proof}

We first formalize the sink-identification argument of \cref{example:bivariate_sink}.


\begin{proposition}[Sink identification from score ratios]\label{prop:sink_ratio}
Let $\x = \f(\s)$ for $\s \sim p_S^0$. For $e \in \{1,2\}$, define
$$
\Delta(0,e,\mathbf x) := \nabla \left(\log p^0(\mathbf x)-\log p^e(\mathbf x)\right),
\qquad
R_i(\mathbf x) := \frac{\Delta(0,1,\mathbf x)_i}{\Delta(0,2,\mathbf x)_i}.
$$
Under Assumptions \ref{ass:diffeomorphism}, \ref{ass:faithfulness}, and \ref{ass:sufficient_variability}, for every $i \in \set{1,...,d}$,
$$
X_i \textnormal{ is a sink } \iff R_i \indep \mathbf X_{-i}.
$$
\end{proposition}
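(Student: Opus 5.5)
The plan is to generalize \cref{example:bivariate_sink}. The first step is to write the score difference in latent coordinates. By the change-of-variables formula, the log-det term is environment-invariant and cancels, so
$$
\Delta(0,e,\x) = J^T_{\f^{-1}}(\x)\,\boldsymbol\delta^e(\s), \qquad \delta^e_k(\s) := \partial_{s_k}\big(\log p^0(s_k)-\log p^e(s_k)\big),
$$
and hence $\Delta(0,e,\x)_i = \sum_k \partial_{x_i} s_k\, \delta^e_k(s_k)$. Each $\delta^e_k$ depends only on $s_k$ because the sources are mutually independent.

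The second step is to pin down which terms actually appear. Fix a causal order and write $\s = \f^{-1}(\x)$, with $s_k = \f^{-1}_k(\x_{\Parents_k}, x_k)$; this uses invertibility of $F_k$ in $s_k$, which follows from \cref{ass:diffeomorphism} and the triangular Jacobian. Then $\partial_{x_i} s_k \neq 0$ only if $k=i$ or $i \in \Parents_k$. Consequently:
\begin{itemize}
\item If $X_i$ is a sink, $\Delta(0,e,\x)_i = \partial_{x_i} s_i\,\delta^e_i(s_i)$. The invariant factor $\partial_{x_i}s_i$ cancels in the ratio, leaving $R_i = \delta^1_i(s_i)/\delta^2_i(s_i)$, which is well defined a.s.\ by \cref{ass:sufficient_variability}(i). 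So $R_i$ is a function of $S_i$ alone.
\item Since $\mathbf X_{-i}$ is a function of $\bfS_{-i}$ alone when $i$ is a sink (no other node descends from $S_i$), independence of the sources gives $R_i \indep \mathbf X_{-i}$ directly. This direction needs no faithfulness.
\end{itemize}

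The third step, the converse, is the main obstacle. Suppose $X_i$ is not a sink, and let $X_c$ be a child of $i$. Now $R_i = \big(\sum_k a_k \delta^1_k\big)/\big(\sum_k a_k \delta^2_k\big)$ with $a_k = \partial_{x_i}s_k$ a function of $\s$, and the sum includes $k=c$. I would show that $R_i$ genuinely depends on some $S_j$ with $j \neq i$. The quotient rule gives
$$
\partial_{s_j} R_i \propto \Delta_i^{(2)}\,\partial_{s_j}\Delta_i^{(1)} - \Delta_i^{(1)}\,\partial_{s_j}\Delta_i^{(2)},
$$
which vanishes exactly when the vectors $(\Delta^{(1)}_i,\Delta^{(2)}_i)$ and $(\partial_{s_j}\Delta^{(1)}_i,\partial_{s_j}\Delta^{(2)}_i)$ are collinear. \cref{ass:sufficient_variability}(ii) supplies a $j\neq i$ for which this fails on a positive-measure set, so $R_i$ is a nonconstant function of $S_j$.

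The remaining task is to identify an $X_m$, $m\neq i$, that shares this dependence. I would build the joint structural equation model $\bfX=\f(\bfS)$, $\bfR=\mathbf g(\bfS)$, with latent roots $\bfS$. Here $R_i$ has $S_j$ among its parents (causal minimality of the dependence just shown), and $S_j$ is a parent of $X_j$. The one subtlety is the case $j=i$, which is excluded by the assumption's choice of $j$. When $j\neq i$, the path $R_i \leftarrow S_j \to X_j$ is open given the empty set, so $R_i \notindep_d X_j$. The contrapositive of \cref{ass:faithfulness}, applied to the pair $(R_i, X_j)$, then yields $R_i \notindep X_j$, and hence $R_i \notindep \mathbf X_{-i}$.

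I expect the delicate points to be justifying that $\partial_{s_j}R_i \neq 0$ on positive measure translates into a d-connecting edge in a DAG on which faithfulness applies. In particular, I need to make sure the edge set of $\mathbf g$ is defined by genuine functional dependence, so that faithfulness is meaningful for deterministic measurements as \cref{ass:faithfulness} is phrased.
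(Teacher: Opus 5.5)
Your proposal is correct and follows essentially the same route as the paper's proof. You cancel the log-determinant by the change of variables. For a sink, only the invariant factor $\partial_{x_i}s_i$ multiplies $\delta^e_i(s_i)$, so it cancels in the ratio, and $R_i \indep \mathbf X_{-i}$ follows from independence of the sources. For a non-sink, you use the quotient rule with \cref{ass:sufficient_variability} to get genuine dependence of $R_i$ on some $S_j$ with $j\neq i$, then the open path $R_i \leftarrow S_j \to X_j$ and faithfulness. The delicate point you flag is that the edge $S_j \to R_i$ in the joint model $\mathbf X=\mathbf f(\mathbf S)$, $\mathbf R=\mathbf g(\mathbf S)$ is defined by this functional dependence. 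The paper handles it exactly as you propose, by asserting that edge directly.
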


\begin{proof}
Without loss of generality, let the variables be ordered according to a valid causal order, so that $J_{\f^{-1}}$ is lower triangular. By the change-of-variable formula,
$$
\Delta(0,e,\f(\s)) = J_{\f^{-1}}^T(\f(\s)) \nabla \left(\log p^0(\s)-\log p^e(\s)\right),
$$
hence
$$
\Delta(0,e,\f(\s))_i
=
\sum_{j=i}^d
\partial_{x_i} s_j \,
\partial_{s_j}\left(\log p^0(s_j)-\log p^e(s_j)\right).
$$
Assume first that $X_i$ is a sink. Then no node $X_j$, $j>i$, is a descendant of $X_i$. Equivalently, the $i$-th column of $J_{\f^{-1}}$ has only the diagonal nonzero entry, and therefore
$$
\Delta(0,e,\f(\s))_i
=
\partial_{x_i}s_i \,
\partial_{s_i}\left(\log p^0(s_i)-\log p^e(s_i)\right).
$$
Taking the ratio cancels the Jacobian term; defining $\mathbf g: \bfS \mapsto \bfR$, we get:
$$
R_i = \mathbf g_i(\s)
=
\frac{\partial_{s_i}\left(\log p^0(s_i)-\log p^1(s_i)\right)}
{\partial_{s_i}\left(\log p^0(s_i)-\log p^2(s_i)\right)}.
$$
Thus $R_i$ is a function of $S_i$ only. Since $X_i$ is a sink, $\mathbf X_{-i}$ is a function of $\mathbf S_{-i}$ only, and by independence of the sources we obtain
$$
R_i \indep \mathbf X_{-i},
$$
where we consider under $\bfS \sim p^{0}_{\bfS}$.

Conversely, suppose that $X_i$ is not a sink. By \cref{ass:sufficient_variability}, there exists $X_j \in \mathbf X_{-i}$ such that the vectors
$$
\big(\Delta(0,1,\f(\s))_i,\Delta(0,2,\f(\s))_i\big)
\quad \textnormal{and} \quad
\big(\partial_{s_j}\Delta(0,1,\f(\s))_i,\partial_{s_j}\Delta(0,2,\f(\s))_i\big)
$$
are not collinear on a positive-measure set. By \cref{ass:sufficient_variability}(i), $\Delta(0,2,\f(\s))_i \neq 0$ a.s. Hence
$$
\partial_{s_j} R_i(\f(\s))
=
\frac{
\partial_{s_j}\Delta(0,1,\f(\s))_i \, \Delta(0,2,\f(\s))_i
-
\Delta(0,1,\f(\s))_i \, \partial_{s_j}\Delta(0,2,\f(\s))_i
}{
\Delta(0,2,\f(\s))_i^2
}.
$$
The numerator is nonzero on a positive-measure set exactly because the above two vectors are not collinear. Therefore $\partial_{s_j} R_i(\f(\s)) \neq 0$ on a positive-measure set, so the structural equation defining $R_i$ depends on $S_j$.

Consider the structural equation models given by $\mathbf X := \f(\mathbf S)$ and $\mathbf R := \mathbf g(\mathbf S)$. Given that $\partial_{s_j} R_i(\f(\s)) \neq 0$ on a positive-measure subset of $\mathbf S$ domain, in the corresponding DAG we have $S_j \to R_i$; also, we have $S_j \to X_j$, so
$$
R_i \notindep_d X_j.
$$
By \cref{ass:faithfulness}, this implies
$$
R_i \notindep X_j.
$$
Since $X_j$ is a component of $\mathbf X_{-i}$, it follows that
$$
R_i \notindep \mathbf X_{-i}.
$$
This proves the claim.
\end{proof}

Next, we restate the theorem and present its demonstration.

\CdIdentifiability

\begin{proof}[Proof of Theorem 1]
By \cref{prop:sink_ratio}, the observed distributions $(p_{\mathbf X}^e)_{e=0}^2$ determine exactly which nodes are sinks: the quantity $R_i$ is observable from the score differences, and
$$
R_i \indep \mathbf X_{-i}
\iff
X_i \textnormal{ is a sink.}
$$

Now remove one sink. Since a sink has no children, the remaining variables again form an acyclic structural causal model with independent sources, obtained by deleting that node and its noise term. The restricted model still satisfies Assumptions \ref{ass:diffeomorphism}, \ref{ass:sufficient_variability}, and \ref{ass:faithfulness}. Applying \cref{prop:sink_ratio} recursively identifies the sinks of the reduced graph at each step. Therefore the data determine a set of valid causal orders of the SCM.

Under \cref{ass:faithfulness} of faithfulness, given the causal order, we can uniquely and correctly identify the causal graph. This proves the claim.
\end{proof}

\subsection{Proof of \cref{thm:ica_identifiability}}\label{app:ica_proof}
For convenience, we begin restating the theorem. 

\IcaIdentifiability

\subsubsection{Sketch of the proof}
Given that the proof is technical, first, we provide a sketch to present the intuition without mathematical complications.

\begin{proof}[Proof sketch]
By \cref{thm:full_graph_identifiability}, the causal order is identifiable. Hence, WLOG, we assume $J_{\f}$ is lower triangular. The proof proceeds by induction.

\paragraph{Base case.} For the source node $X_1$, the only nonzero entry in the row $J_{\f,1}$ is the diagonal element $J_{\f,11}$. Consider an alternative ICA solution $\widehat{\f}$: this must satisfy the same causal order imposed by $\f$: hence, we have $J_{\widehat{\f},11} \neq 0$ and all remaining entries in the row $J_{\widehat{\f},1}$ equal zero. Being both $\f, \widehat \f$ diffeomorphisms, there is a diffeomorphic function $\mathbf h: \widehat \f^{-1} \circ \f: \bfS \mapsto \widehat \bfS$: then, the structure of the Jacobians of $\f$ and $\widehat \f$ imposes that $\hat S_1 = \mathbf h_1(S_1)$, i.e., the noise of a source node $X_1$ is identified up to an element wise invertible transformation. 

\paragraph{Inductive step.} Let, for each $j < i$, $\hat S_j = \mathbf h_j(S_j)$. Given that $\mathbf h$ is lower triangular, we have $\hat S_i = \mathbf h_{i}(S_1,...,S_i)$. Now, by the inductive assumption, $S_j = \mathbf h_j^{-1}(\hat S_j)$ for $j<i$. Being $\hat S_i$ by construction independent of  $\hat S_1,...,\hat S_{i-1}$, equally 
\begin{equation*}
    \hat S_i \indep \bfS_{1:i-1}.
\end{equation*}
We then show that this forces $\mathbf h_i(S_1,\dots,S_i)$ to reduce to a function of $S_i$ alone, namely $\hat S_i = \mathbf h_i(S_i)$. This is the main technical step of the proof and requires care. Once established, the claim follows.
\end{proof}

Before presenting the complete proof of the main theorem, we introduce a useful lemma. 
\begin{lemma}[Uniqueness of a monotone scalar transport]\label{lem:monotone_scalar_transport}
Let $S$ be a scalar random variable with strictly positive density on $\mathcal I \subseteq \R$. Let $h, \tilde h : \mathcal I \to \R$ be two diffeomorphisms onto their images with the same orientation, i.e. either both strictly increasing or both strictly decreasing. If $h(S)$ and $\tilde h(S)$ have the same distribution, then
$$
h(s) = \tilde h(s), \qquad \forall s \in \mathcal I.
$$
\end{lemma}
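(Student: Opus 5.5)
The plan is to reduce the claim to a statement about cumulative distribution functions. Let $F_S$ be the CDF of $S$. Since $S$ has a strictly positive density on $\mathcal I$, $F_S$ is strictly increasing and continuous on $\mathcal I$. I would also note that $\mathcal I$ must be an interval: a diffeomorphism onto its image that is strictly monotone is only meaningful on a connected domain, and I will assume this as implicit in the statement.

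Treat first the case where $h$ and $\tilde h$ are both strictly increasing, and let $G$ be the common CDF of $h(S)$ and $\tilde h(S)$. For every $s \in \mathcal I$, monotonicity gives
\begin{equation*}
G(h(s)) = \mathbb P(h(S) \le h(s)) = \mathbb P(S \le s) = F_S(s),
\end{equation*}
and in the same way $G(\tilde h(s)) = F_S(s)$. Hence $G(h(s)) = G(\tilde h(s))$ for all $s$.

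To conclude $h(s)=\tilde h(s)$, I need $G$ to be injective on the relevant set. Both $h(\mathcal I)$ and $\tilde h(\mathcal I)$ are open intervals, and each equals the support interval of the common law of $h(S)$ and $\tilde h(S)$. The reason is that the pushforward of a density that is positive on an interval, under a diffeomorphism, has a positive density on the image interval. It follows that $h(\mathcal I)=\tilde h(\mathcal I)=:\mathcal J$, and that the law has positive density on $\mathcal J$, so $G$ is strictly increasing on $\mathcal J$. Injectivity of $G$ on $\mathcal J$ then gives $h(s)=\tilde h(s)$. Equivalently, on $\mathcal J$ we have $h = G^{-1}\circ F_S = \tilde h$.

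In the decreasing case, I would repeat the same argument with the survival function. Strict decrease gives
\begin{equation*}
\mathbb P(h(S) \ge h(s)) = \mathbb P(S \le s) = F_S(s),
\end{equation*}
and the same identity holds for $\tilde h$. Alternatively, one can apply the increasing case to $-h$ and $-\tilde h$, since $-h(S)$ and $-\tilde h(S)$ also share a common law.

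The only delicate point is the injectivity of $G$. It rests on showing that the images coincide and carry positive density, so that $G$ has no flat pieces where two distinct values could share a CDF level. Everything else is direct.
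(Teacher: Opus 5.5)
Your proposal is correct and is essentially the paper's argument: the identity $G(h(s)) = F(s) = G(\tilde h(s))$ (with $1-F$, equivalently the survival function, in the decreasing case) combined with strict monotonicity of $G$. Your check that $h(\mathcal I)=\tilde h(\mathcal I)$ carries positive density makes explicit the injectivity of $G$ at the relevant points, which the paper only asserts. The restriction to an interval $\mathcal I$ is not needed, since strict monotonicity makes sense on any subset of $\mathbb R$: for open $\mathcal I$, the common law already has positive density in a neighbourhood of each of $h(s)$ and $\tilde h(s)$, and that is all the injectivity step requires.
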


\begin{proof}
Let $F$ be the cdf of $S$, and let $G$ be the common cdf of $h(S)$ and $\tilde h(S)$. Since $S$ has strictly positive density on $\mathcal I$, $F$ is strictly increasing on $\mathcal I$. Moreover, by the change of variable formula, the common law of $h(S)$ and $\tilde h(S)$ has strictly positive density on its support, so $G$ is strictly increasing there.

Assume first that $h$ and $\tilde h$ are strictly increasing. Then, for every $s \in \mathcal I$,
$$
G(h(s)) = \mathbb P(h(S) \leq h(s)) = \mathbb P(S \leq s) = F(s),
$$
and similarly
$$
G(\tilde h(s)) = \mathbb P(\tilde h(S) \leq \tilde h(s)) = \mathbb P(S \leq s) = F(s).
$$
Hence $G(h(s)) = G(\tilde h(s))$, and strict monotonicity of $G$ gives $h(s)=\tilde h(s)$.

If instead $h$ and $\tilde h$ are strictly decreasing, then for every $s \in \mathcal I$,
$$
G(h(s)) = \mathbb P(h(S) \leq h(s)) = \mathbb P(S \geq s) = 1-F(s),
$$
where the last equality uses that $S$ has a density. The same identity holds with $\tilde h$ in place of $h$, so again $G(h(s)) = G(\tilde h(s))$, and strict monotonicity of $G$ gives $h(s)=\tilde h(s)$.

This proves the claim.
\end{proof}

Next, we demonstrate \cref{thm:ica_identifiability}.

\begin{proof}
Let $(\widehat{\f}, \hat p_{\bfS}^e)_e$ be an alternative ICA model generating the same observed distributions $(p_{\bfX}^e)_e$. By \cref{thm:full_graph_identifiability}, the causal order is identifiable. Hence, after a common permutation of the coordinates, we may assume without loss of generality that both $J_{\f}$ and $J_{\widehat{\f}}$ are lower triangular.

Define
$$
\mathbf h := \widehat{\f}^{-1}\circ \f,
\qquad
\widehat{\bfS} = \mathbf h(\bfS).
$$
Since $\f$ and $\widehat{\f}$ are lower triangular diffeomorphisms, so is $\mathbf h$. Therefore, for each $i=1,\dots,d$,
$$
\hat S_i = \mathbf h_i(S_1,\dots,S_i).
$$

We now show by induction that each $\mathbf h_i$ depends only on $S_i$. For $i=1$, this is immediate. Assume that for some $i>1$, we already proved
$$
\hat S_j = \mathbf h_j(S_j), \qquad j=1,\dots,i-1.
$$
Then $\widehat{\bfS}_{1:i-1}$ is an invertible componentwise transformation of $\bfS_{1:i-1}$. Since the coordinates of $\widehat{\bfS}$ are independent, $\hat S_i \indep \widehat{\bfS}_{1:i-1}$, and therefore also
$$
\hat S_i \indep \bfS_{1:i-1}.
$$

On the other hand, $\hat S_i = \mathbf h_i(S_1, \ldots, S_i)$. Fix any value of $S_{1:i-1}$.
Because $h$ is a diffeomorphism, the map
$$
s_i \mapsto \mathbf h_i(S_1, \ldots, S_{i-1}, s_i)
$$
is a one-dimensional diffeomorphism, hence strictly monotone. Moreover, the domain of
$\mathbf h$ is connected, because it coincides with the domain of $\f$, which is connected by
assumption. Since $\partial_{s_i} \mathbf h_i$ is continuous and never vanishes, it must have
constant sign on the connected domain of $\mathbf h$. Therefore the above one-dimensional map
has the same orientation for every value of $S_{1:i-1}$.


Since $\hat S_i \indep \bfS_{1:i-1}$, the conditional law of $\hat S_i$ given
$\bfS_{1:i-1}$ coincides almost surely with the marginal law of $\hat S_i$.
Since $\hat S_i = \mathbf h_i(S_1,\ldots,S_i)$, there exists a set $E \subseteq \R^{i-1}$ of full measure under the law of $\bfS_{1:i-1}$ such that, for every $c \in E$, the random variable $\mathbf h_i(c,S_i)$ has the same
distribution as $\hat S_i$.

On the other hand, for fixed $c \in E$, the map
$$
s_i \mapsto \mathbf h_i(c,s_i)
$$
is strictly monotone, and by connectedness its orientation does not depend on
$c$. Hence, for any $c,c' \in E$, \cref{lem:monotone_scalar_transport} applies
to the two maps $s_i \mapsto \mathbf h_i(c,s_i)$ and
$s_i \mapsto \mathbf h_i(c',s_i)$, yielding
$$
\mathbf h_i(c,s_i)=\mathbf h_i(c',s_i), \qquad \forall s_i.
$$
Therefore there exists a function, still denoted by $\mathbf h_i$, such that
$$
\mathbf h_i(S_1,\ldots,S_{i-1},s_i)=\mathbf h_i(s_i)
$$
for every $s_i$ and every $(S_1,\ldots,S_{i-1}) \in E$.

Finally, since $E$ has full measure under the law of $\bfS_{1:i-1}$, it is dense
in $\supp(\bfS_{1:i-1})$. Since the original map
$\mathbf h_i(S_1,\ldots,S_{i-1},s_i)$ is continuous, the above identity extends
to all values of $S_{1:i-1}$ in $\supp(\bfS_{1:i-1})$. Thus
$$
\hat S_i = \mathbf h_i(S_i).
$$

Repeating the argument for all $i=1,\dots,d$, we obtain
$$
\hat S_i = \mathbf h_i(S_i), \qquad i=1,\dots,d.
$$
Since $\mathbf h$ is a diffeomorphism, each $\mathbf h_i$ is invertible. Therefore,
$$
\widehat{\f}^{-1}\circ \f(\bfS) = (h_1(S_1),\dots,h_d(S_d)),
$$
so $\f$ is identifiable up to elementwise reparametrization of the sources.
\end{proof}

\subsection{Proof of \cref{thm:counterfactual}}\label{app:cor_proof}

In this section, we present teh corollary statement and its proof. 

\CorCounterfactual
\begin{proof}
Let $(\widehat{\f}, \hat p_{\bfS}^e)_e$ be an alternative SCM compatible with the same observational distribution and the same two auxiliary environments. By \cref{thm:ica_identifiability}, the two models can differ only by an elementwise reparametrization of the sources. That is, there exists an invertible componentwise map $\mathbf h$ such that
$$
\widehat{\bfS} = \mathbf h(\bfS),
\qquad
\widehat{\f} = \f \circ \mathbf h^{-1}.
$$
Thus, the two SCMs differ only by a relabeling of the exogenous variables.

Fix now any factual observation $\mathbf x$. In the original model, abduction gives $\mathbf s = \f^{-1}(\mathbf x)$. In the alternative model, abduction gives
$$
\hat{\mathbf s} = \widehat{\f}^{-1}(\mathbf x) = \mathbf h(\mathbf s).
$$
Hence the inferred exogenous variables in the two models correspond exactly under the same componentwise transformation.

Now write the original SCM as
$$
X_i := F_i(X_{\Parents_i}, S_i), \qquad i = 1, \dots, d.
$$
Since $\widehat{\f} = \f \circ \mathbf h^{-1}$ and $\mathbf h$ acts componentwise, the alternative SCM admits the representation
$$
X_i := F_i(X_{\Parents_i}, \mathbf h_i^{-1}(\widehat S_i)), \qquad i = 1, \dots, d.
$$
Therefore, after abduction at the factual observation $\mathbf x$, the two models satisfy
$$
\mathbf h_i^{-1}(\hat s_i) = s_i, \qquad i = 1, \dots, d.
$$

Consider now any intervention on the endogenous variables. This replaces the same structural equations in both SCMs by the same intervened assignments. We claim that the resulting counterfactual values of $\mathbf X$ coincide in the two models.

Since the SCM is acyclic, there exists a causal order. Without loss of generality, assume that this order is $1, \dots, d$. We now argue by induction on $i$.

For the base case $i=1$, if $X_1$ is directly intervened on, then its counterfactual value is the same in both models by definition of the intervention. Otherwise,  if intervention occurs on $X_j$ for $j \neq i$, since $X_1$ has no parents, its counterfactual value in the original model is given by
$$
X_1 := F_1(S_1),
$$
while in the alternative model it is given by
$$
X_1 := F_1(\mathbf h_1^{-1}(\widehat S_1)).
$$
After abduction, $\mathbf h_1^{-1}(\hat s_1) = s_1$, so the two right-hand sides are equal. Hence the counterfactual value of $X_1$ is the same in both models.

For the inductive step, let $i > 1$ and assume that the counterfactual values of $X_1, \dots, X_{i-1}$ coincide in the two models. If $X_i$ is directly intervened on, then again its counterfactual value is the same in both models by definition of the intervention. Otherwise, in the original model its counterfactual value is obtained from
$$
X_i := F_i(X_{\Parents_i}, S_i),
$$
while in the alternative model it is obtained from
$$
X_i := F_i(X_{\Parents_i}, \mathbf h_i^{-1}(\widehat S_i)).
$$
By the inductive hypothesis, the counterfactual values of all variables in $X_{\Parents_i}$ coincide in the two models, since $\Parents_i \subseteq \{1, \dots, i-1\}$. Moreover, after abduction, $\mathbf h_i^{-1}(\hat s_i) = s_i$. Therefore the right-hand sides of the two structural equations are equal, and so the counterfactual value of $X_i$ is the same in both models.

By induction, the counterfactual values of all coordinates of $\mathbf X$ coincide in the two models. Hence, the two SCMs produce the same counterfactual value of $\mathbf X$ for the given factual observation $\mathbf x$ and the given intervention.

Since $\mathbf x$ and the intervention were arbitrary, every SCM compatible with the observational distribution and the two auxiliary environments gives the same counterfactual answers. This proves counterfactual identifiability.
\end{proof}


\section{On the genericity of the faithfulness assumption}\label{app:faithfulness}

Faithfulness assumption is well known to be generic in precise terms both in parametric \citep{Spirtes2000,uhler2012geometry} and non-parametric Bayesian networks \citep{boeken2026bayesian}. Below, we provide an example of genericity for the structural equation model (SEM) of \cref{example:bivariate_sink}. Note that such an SEM defines a Bayesian network, so all genericity results in the literature apply to our problem. Computations below only serve for illustrative purposes.

\subsection{Faithfulness in Linear Gaussian Structural Equation Models}\label{app:linear_gauss}

We now formally state that the statistical dependence $R_1 \notindep X_2$, motivated by the d-connection in Figure \ref{fig:measurement_model}, holds generically. In linear Gaussian settings, violations of this faithfulness assumption correspond to a set of environment parameters with Lebesgue measure zero, which is well known for faithfulness on Linear Gaussian SCMs. 

\begin{theorem}[Generic Faithfulness of the Score Ratio]
    Assume a bivariate linear Gaussian structural causal model $X_1 = S_1$ and $X_2 = aS_1 + bS_2$ with $b \neq 0$. Let the sources $S_i$ across environments $e \in \{0, 1, 2\}$ be distributed as $\mathcal{N}(\mu_i^e, (\sigma_i^e)^2)$. Let $R_1$ be the score ratio defined in Equation \eqref{eq:example_ratios}. The set of environment parameters $\theta = \{ \mu_i^e, \sigma_i^e \}_{i,e}$ for which $R_1$ and $X_2$ are statistically independent has Lebesgue measure zero.
\end{theorem}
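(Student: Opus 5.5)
The plan is to compute $R_1$ in closed form, observe that it is constant along the lines of a pencil through a point, and show that $X_2$ cannot be independent of $R_1$ unless a polynomial condition on the parameters holds.

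\textbf{Step 1: closed form of $R_1$.} For a Gaussian source, the score difference is affine:
$$
\partial_{s}\bigl(\log p^0(s_i)-\log p^e(s_i)\bigr)=\alpha_i^e s+\beta_i^e,\qquad
\alpha_i^e=\frac{1}{(\sigma_i^e)^2}-\frac{1}{(\sigma_i^0)^2},\qquad
\beta_i^e=\frac{\mu_i^0}{(\sigma_i^0)^2}-\frac{\mu_i^e}{(\sigma_i^e)^2}.
$$
Inverting the mixing gives $s_1=x_1$ and $s_2=(x_2-ax_1)/b$, so $\partial_{x_1}s_2=-a/b$. Substituting into \cref{eq:example_ratios},
$$
R_1=\frac{L_1(\s)}{L_2(\s)},\qquad L_e(\s):=\alpha_1^e s_1+\beta_1^e-\tfrac{a}{b}\bigl(\alpha_2^e s_2+\beta_2^e\bigr),\quad e=1,2.
$$
This is a ratio of two affine functions of $(s_1,s_2)$.

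\textbf{Step 2: the coefficient $a$ must be nonzero.} If $a=0$, then $R_1$ depends only on $S_1$ and $X_2=bS_2$. Independence then holds for every $\theta$, so the statement cannot be true in that case. I would make explicit that $a\neq 0$ is required, as it is implied by causal minimality (the edge $X_1\to X_2$ exists). From here on $a\neq0$ and $b\neq0$.

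\textbf{Step 3: the exceptional set is contained in a polynomial zero set.} Let $\mathbf A\in\R^{2\times 2}$ be the matrix whose rows are the linear parts $(\alpha_1^e,-\tfrac ab\alpha_2^e)$ of $L_e$, for $e=1,2$. Then
$$
\det \mathbf A=-\tfrac ab\bigl(\alpha_1^1\alpha_2^2-\alpha_1^2\alpha_2^1\bigr).
$$
This determinant is a nonzero rational function of $\theta$: for instance it is nonzero when $\sigma_1^1\neq\sigma_1^0$, $\sigma_2^2\neq\sigma_2^0$, $\sigma_1^2=\sigma_1^0$, since then $\alpha_1^2=0$ while $\alpha_1^1\alpha_2^2\neq0$. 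After multiplying through by the positive quantity $\prod(\sigma_i^e)^2$, its zero set is the zero set of a nonzero polynomial, which has Lebesgue measure zero. The remaining task is to show that $\det\mathbf A\neq 0$ forces $R_1\notindep X_2$.

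\textbf{Step 4: dependence when $\det\mathbf A\neq0$.} When $\det \mathbf A\neq 0$, the lines $\{L_1=0\}$ and $\{L_2=0\}$ meet at a single point $P$. Each level set $\{R_1=r\}=\{L_1-rL_2=0\}$ is a line through $P$, with $P$ removed. As $r$ ranges over $\R$, the directions of these lines sweep all directions except that of $\{L_2=0\}$, and the map from $r$ to the direction is a homeomorphism onto its image.

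\begin{itemize}
\item \emph{Existence of a bad level $r^*$.} Generically, there is a value $r^*$ whose line is parallel to $\{a s_1+b s_2=\text{const}\}$, i.e. a line on which $X_2$ is constant. The only way $r^*$ fails to exist is if that direction is exactly the direction of $\{L_2=0\}$, which is a further polynomial equation in $\theta$. I would add it to the null exceptional set.
\item \emph{Degeneracy at $r^*$.} Since $(S_1,S_2)$ has a strictly positive, smooth Gaussian density, the conditional law of $X_2$ given $R_1=r$ can be taken to be the law of $as_1+bs_2$ along the line $\{R_1=r\}$. Its density is the Gaussian density times the coarea factor, which is smooth in $r$ away from $P$. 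This gives a version of the conditional law that is weakly continuous in $r$. Its conditional variance $v(r)$ is continuous, strictly positive when the line is not parallel to the $X_2$-level sets, and $v(r^*)=0$.
\item \emph{Conclusion.} If $R_1\indep X_2$, the conditional law of $X_2$ given $R_1$ would equal the marginal law for almost every $r$. By continuity this forces $v$ to be constant and equal to $\operatorname{Var}(X_2)=a^2(\sigma_1^0)^2+b^2(\sigma_2^0)^2>0$, contradicting $v(r^*)=0$.
\end{itemize}

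Hence independence can only hold on the union of two proper polynomial zero sets in $\theta$, which has Lebesgue measure zero.

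\textbf{Main obstacle.} The delicate step is Step 4: making the regular conditional distribution along the pencil rigorous. This needs disintegration along lines through $P$, an explicit coarea Jacobian, and continuity of $v(r)$ near $r^*$, including integrability issues at $P$.

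If that becomes unwieldy, a fallback is to exhibit a test function pair and use analyticity in $\theta$. For example, take
$$
\Phi(\theta)=\mathbb E\bigl[\arctan(R_1)\,X_2\bigr]-\mathbb E[\arctan R_1]\,\mathbb E[X_2].
$$
I would show that $\Phi$ is real-analytic on the parameter domain and nonzero at one explicitly computable point, for instance with $\mu$'s chosen so that the symmetry $(s_1,s_2)\mapsto$ reflection through $P$ is broken. A real-analytic function that is not identically zero vanishes only on a set of Lebesgue measure zero.
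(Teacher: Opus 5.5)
Your proof is correct, and it takes a genuinely different and more complete route than the paper's. The paper shares your Step 1, writing $R_1$ as a ratio of two affine functions of $(S_1,S_2)$ with $\gamma=-a/b$. It then only asserts three things: that independence from $X_2$ forces $R_1$ to depend on $(S_1,S_2)$ ``through directions orthogonal to $X_2$'', that this is a non-trivial polynomial identity in $\theta$, and that the conclusion follows from the measure-zero property of proper algebraic varieties. The polynomial is never exhibited, and the implication from independence to that identity is never proved.

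You supply both missing pieces. First, you give explicit exceptional polynomials: $\alpha_1^1\alpha_2^2-\alpha_1^2\alpha_2^1$, whose nonvanishing makes the level sets of $R_1$ a pencil of lines through $P$, and $a^2\alpha_2^2+b^2\alpha_1^2$, which is the explicit form of your condition that the level lines of $X_2$ be parallel to $\{L_2=0\}$. Second, you give an actual dependence argument: one line of the pencil is a level line of $X_2$, so $\operatorname{Var}(X_2\mid R_1=r^*)=0$, and by continuity this is incompatible with independence.

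Your Step 2 also exposes a real defect in the statement as written. For $a=0$, $R_1$ is a function of $S_1$ alone and $X_2=bS_2$, so independence holds for every $\theta$. This is exactly the case where the paper's putative polynomial is identically zero. The hypothesis $a\neq 0$ (minimality of the edge $X_1\to X_2$) must be added. The paper's route is short and fits the standard ``violations lie on an algebraic variety'' template, but as written it is a sketch. Yours actually identifies the exceptional set and proves dependence outside it.

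The price of your route is the disintegration step, but it is easier than you fear. Use pencil coordinates $\mathbf s=P+t\,u(r)$, with $u(r)$ the unit direction of $\{L_1=rL_2\}$ and $\ell_e$ the linear part of $L_e$. The Jacobian is $|t|\,|\det\mathbf A|/|\ell_1-r\ell_2|^2$, and this gives three facts:
\begin{itemize}
\item $R_1$ has a strictly positive density on all of $\mathbb R$, so ``almost every $r$'' is in the Lebesgue sense and accumulates at $r^*$.
\item The conditional density of $T$ given $R_1=r$ is proportional to $p_{\mathbf S}(P+t\,u(r))\,|t|$.
\item $\operatorname{Var}(X_2\mid R_1=r)=\bigl((a,b)\cdot u(r)\bigr)^2\operatorname{Var}(T\mid R_1=r)$, which is visibly continuous in $r$, with no singularity at $P$.
\end{itemize}
You can therefore drop the $\arctan$ fallback. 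Its claimed real-analyticity in $\theta$ would itself need proof, since $\arctan(L_1/L_2)$ jumps across the $\theta$-dependent line $\{L_2=0\}$.
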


\begin{proof}
    Because the sources are Gaussian, their log-densities are quadratic. The score difference for source $i$ between the base environment $0$ and auxiliary environment $e$ is therefore a strictly linear function of $s_i$:
    $$
    \Delta_{ie}(s_i) = \partial_{s_i} \left( \log p^0(s_i) - \log p^e(s_i) \right) = \alpha_{ie} s_i + \beta_{ie}
    $$
    where $\alpha_{ie} = \frac{1}{(\sigma_i^e)^2} - \frac{1}{(\sigma_i^0)^2}$ and $\beta_{ie} = \frac{\mu_i^0}{(\sigma_i^0)^2} - \frac{\mu_i^e}{(\sigma_i^e)^2}$. 
    
    From the causal mechanism $x_2 = as_1 + bs_2$, we have $s_2 = \frac{x_2 - ax_1}{b}$. The Jacobian term is thus the constant $\partial_{x_1} s_2 = -a/b$. Let us denote this constant as $\gamma$. 
    
    Substituting these linear forms into the definition of $R_1$, we obtain a rational function of $S_1$ and $S_2$:
    $$
    R_1(S_1, S_2) = \frac{\alpha_{11} S_1 + \beta_{11} + \gamma (\alpha_{21} S_2 + \beta_{21})}{\alpha_{12} S_1 + \beta_{12} + \gamma (\alpha_{22} S_2 + \beta_{22})}
    $$
    We wish to find the conditions under which $R_1(S_1, S_2) \perp X_2$, where $X_2 = aS_1 + bS_2$. For a non-constant rational function of joint Gaussians to be perfectly independent of a non-trivial linear combination of those same Gaussians, the parameter gradients must align such that $R_1$ depends on $S_1$ and $S_2$ strictly through vectors orthogonal to $X_2$. 
    
    This requirement imposes strict algebraic constraints on the parameter space $\theta$. Specifically, the coefficients of the numerator and denominator of $R_1$ must satisfy a non-trivial polynomial equality involving $\alpha_{ie}, \beta_{ie}, a,$ and $b$. 
    
    Let $\Theta \subset \mathbb{R}^k$ be the parameter space of all valid environment means and variances. The subset of parameters that satisfy this polynomial constraint defines a proper algebraic variety. By well-known results in algebraic geometry \citep{Spirtes2000}, the roots of a non-trivial multivariate polynomial evaluated over $\mathbb{R}^k$ have a Lebesgue measure of zero. 
    
    Consequently, exact cancellations leading to $R_1 \perp X_2$ occur only on a set of measure zero, concluding the proof.
\end{proof}


\section{On the genericity of sufficient variability}\label{app:sufficient_variability}

\subsection{Finite-dimensional analytic families}

\begin{proposition}[Genericity of \cref{ass:sufficient_variability}]\label{prop:generic_sufficient_variability}
Let $\Theta \subseteq \R^m$ be open, and a parametrized SCM
$$
(\f_\theta, p_{\bfS,\theta}^e)_{e=0}^2, \qquad \theta \in \Theta.
$$
Assume that $\mathcal S  = \supp (\bfS)$ is connected and open, and that for every $i \in \set{1,...,d}$ and $e \in \set{0,1,2}$ the density $p_{S_i,\theta}^e$ is strictly positive on its domain. Further, assume that the maps
$$
(s_i,\theta) \mapsto p_{S_i,\theta}^e(s_i), \qquad (\s,\theta) \mapsto \f_\theta(\s)
$$
are real analytic. If there exists $\theta^\star \in \Theta$ such that \cref{ass:sufficient_variability} holds, then the set of $\theta \in \Theta$ for which \cref{ass:sufficient_variability} fails has Lebesgue measure zero.
\end{proposition}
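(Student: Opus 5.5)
The strategy is the usual analytic-genericity argument. Each failure of \cref{ass:sufficient_variability} is encoded as the identical vanishing, in $\s$, of some real-analytic function of $(\s,\theta)$. One then shows that the set of parameters for which such a function vanishes identically in $\s$ is either all of $\Theta$ or Lebesgue-null. The witness $\theta^\star$ rules out the first alternative.

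\emph{Step 1: analytic quantities.} Fix a causal order, so that $J_{\f_\theta^{-1}}$ is lower triangular. By the change-of-variable identity used in \cref{example:bivariate_sink},
$$
\Delta_\theta(0,e,\f_\theta(\s))_i=\sum_{j\ge i}\partial_{x_i}s_j\,\partial_{s_j}\bigl(\log p^0_{S_j,\theta}(s_j)-\log p^e_{S_j,\theta}(s_j)\bigr).
$$
Since the densities are strictly positive and analytic, $\log p^e_{S_j,\theta}$ is analytic in $(s_j,\theta)$. Since $\f_\theta$ is an analytic diffeomorphism, the inverse function theorem with analytic dependence on $\theta$ makes $J_{\f_\theta^{-1}}\circ\f_\theta=(J_{\f_\theta})^{-1}$ analytic in $(\s,\theta)$. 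It is therefore convenient to work in source coordinates $\s\in\mathcal S$ throughout. Define
$$
A_i(\s,\theta):=\Delta_\theta(0,2,\f_\theta(\s))_i,
\qquad
B_{ij}(\s,\theta):=\det\begin{pmatrix}\Delta_\theta(0,1,\cdot)_i & \Delta_\theta(0,2,\cdot)_i\\ \partial_{s_j}\Delta_\theta(0,1,\cdot)_i & \partial_{s_j}\Delta_\theta(0,2,\cdot)_i\end{pmatrix}.
$$
Both are real analytic on the connected open set $\mathcal S\times\Theta$; if $\Theta$ is not connected, argue componentwise or assume connectedness.

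\emph{Step 2: failure as identical vanishing.} For an analytic function on the connected set $\mathcal S$, being nonzero a.s., or nonzero on a positive-measure set, is equivalent to not vanishing identically. Hence condition (i) fails at $\theta$ iff $A_i(\cdot,\theta)\equiv0$ for some $i$. Condition (ii) fails at $\theta$ iff, for some non-sink $i$, we have $B_{ij}(\cdot,\theta)\equiv0$ for every $j\neq i$. The index sets are finite, so the failure set is a finite union of finite intersections of sets of the form
$$
Z_F:=\{\theta:\ F(\cdot,\theta)\equiv 0 \text{ on } \mathcal S\}.
$$
It suffices to show that each relevant $Z_F$ is null, or that the intersection over $j$ is null. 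For (ii), take the $j^\star$ that witnesses the assumption at $\theta^\star$; then the intersection is contained in $Z_{B_{ij^\star}}$.

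\emph{Step 3: each $Z_F$ is null.} Fix a point $\s_0\in\mathcal S$. By analyticity in $\s$ and connectedness of $\mathcal S$,
$$
Z_F=\bigcap_{\alpha}\{\theta:\ \partial_\s^\alpha F(\s_0,\theta)=0\},
$$
an intersection of zero sets of the analytic functions $\theta\mapsto\partial^\alpha_\s F(\s_0,\theta)$. Since $\theta^\star\notin Z_F$, some such function is not identically zero on the connected set $\Theta$. The zero set of a nonzero real-analytic function on a connected open set has Lebesgue measure zero (the standard result, e.g.\ via Fubini and induction on dimension). Therefore $Z_F$ is null, and so is the finite union.

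\emph{Main obstacle.} The delicate point is analyticity in $\theta$ of the inverse-Jacobian terms, together with the requirement that $\mathcal S$ not depend on $\theta$. Both are needed so that $F$ is jointly analytic on a fixed product domain. I would secure them with the analytic implicit function theorem applied to $\f_\theta(\s)$, and by reading the hypothesis as giving a common connected open support.
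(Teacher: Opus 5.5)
Your proposal is correct and follows the paper's proof essentially step for step. The paper also writes $\Delta(0,2,\f_\theta(\s))_i$ and the $2\times 2$ collinearity determinant as jointly real-analytic functions of $(\s,\theta)$ and recasts failure of (i) and (ii) as these vanishing identically in $\s$. It likewise takes the witness $j$ from $\theta^\star$ and places the bad set inside the zero set of a Taylor coefficient at $\s_0$ that is a nontrivial analytic function of $\theta$.

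The caveats you flag, a connected $\Theta$ and a $\theta$-independent $\mathcal S$, are genuinely needed and are only implicit in the paper. The same holds for the set of non-sinks being fixed across $\theta$. However, your fallback of arguing componentwise on a disconnected $\Theta$ would fail unless every component contains its own witness $\theta^\star$.
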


\begin{proof}
For $i \in \set{1,...,d}$, define
$$
G_i(\s,\theta) := \Delta(0,2,\f_\theta(\s))_i
$$
where 
$$
\Delta(0, e, \f_{\theta}(\s)) := \nabla \left(\log p_{\bfX,\theta}(\f_\theta(\s))^0 - \log p_{\bfX,\theta}(\f_\theta(\s))^e\right).
$$

For $i,j \in \set{1,...,d}$, $j \neq i$, define
$$
H_{ij}(\s,\theta)
:=
\Delta(0,1,\f_\theta(\s))_i \, \partial_{s_j}\Delta(0,2,\f_\theta(\s))_i
-
\Delta(0,2,\f_\theta(\s))_i \, \partial_{s_j}\Delta(0,1,\f_\theta(\s))_i.
$$
Since positive real analytic functions have real analytic logarithm, the maps
$$
(s_i,\theta) \mapsto \partial_{s_i}\left(\log p^0_{S_i,\theta}(s_i)-\log p^e_{S_i,\theta}(s_i)\right)
$$
are real analytic. Therefore $G_i$ and $H_{ij}$ are real analytic in $(\s,\theta)$.

Since $p_{\bfS,\theta}^0$ is strictly positive on $\mathcal S$, \cref{ass:sufficient_variability}(i) fails for node $i$ if and only if $G_i(\cdot,\theta)$ vanishes on a positive-measure subset of $\mathcal S$. By analyticity on the connected open set $\mathcal S$, this is equivalent to
$$
G_i(\cdot,\theta) \equiv 0.
$$
Likewise, for fixed $i$ and $j \neq i$, the two vectors in \cref{ass:sufficient_variability}(ii) are collinear if and only if $H_{ij}(\s,\theta)=0$. Hence \cref{ass:sufficient_variability}(ii) fails for a non-sink $X_i$ if and only if
$$
H_{ij}(\cdot,\theta) \equiv 0, \qquad \forall j \neq i.
$$

Fix $\theta^\star$ as in the statement. Then $G_i(\cdot,\theta^\star) \not\equiv 0$ for every $i$. Moreover, for every non-sink $X_i$, there exists $j_i \neq i$ such that $H_{ij_i}(\cdot,\theta^\star) \not\equiv 0$. Fix $\s_0 \in \mathcal S$. Expanding around $\s_0$,
$$
G_i(\s,\theta) = \sum_{\alpha} c_{i,\alpha}(\theta) (\s-\s_0)^\alpha,
\qquad
H_{ij_i}(\s,\theta) = \sum_{\alpha} d_{i,\alpha}(\theta) (\s-\s_0)^\alpha,
$$
where each coefficient is real analytic in $\theta$. Since $G_i(\cdot,\theta^\star)$ and $H_{ij_i}(\cdot,\theta^\star)$ are not identically zero, there exist multi-indices $\alpha_i,\beta_i$ such that $c_{i,\alpha_i}$ and $d_{i,\beta_i}$ are non-trivial real analytic functions on $\Theta$. Therefore their zero sets have Lebesgue measure zero.

If \cref{ass:sufficient_variability} fails at $\theta$, then either $G_i(\cdot,\theta)\equiv 0$ for some $i$, or $H_{ij_i}(\cdot,\theta)\equiv 0$ for some non-sink $X_i$. Hence the exceptional set is contained in the finite union
$$
\bigcup_{i=1}^d \set{\theta \in \Theta \,:\, c_{i,\alpha_i}(\theta)=0}
\;\cup\;
\bigcup_{i \,:\, X_i \textnormal{ non-sink}} \set{\theta \in \Theta \,:\, d_{i,\beta_i}(\theta)=0},
$$
which has Lebesgue measure zero.
\end{proof}

\subsection{Linear Gaussian illustration}\label{app:linear_gauss_sufficient_variability}

\begin{proposition}[Linear Gaussian illustration]\label{prop:linear_gaussian_sufficient_variability}
Consider the bivariate setting of \cref{example:bivariate_sink}, and assume
$$
X_1 := S_1, \qquad X_2 := a X_1 + S_2,
$$
with $a \in \R$, and
$$
S_i^e \sim \mathcal N(0,(\sigma_i^e)^2), \qquad i=1,2,\ e=0,1,2.
$$
Define
$$
\alpha_i^e := \frac{1}{(\sigma_i^e)^2} - \frac{1}{(\sigma_i^0)^2}.
$$
If
$$
\alpha_1^2 \neq 0, \qquad \alpha_2^2 \neq 0, \qquad \alpha_2^1 \alpha_1^2 - \alpha_2^2 \alpha_1^1 \neq 0,
$$
then the set of coefficients $a$ for which \cref{ass:sufficient_variability} fails is $\set{0}$.
\end{proposition}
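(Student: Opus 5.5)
The plan is to compute everything in closed form, since in this linear Gaussian model every score difference is linear in the sources. First I compute the source-level score differences. For a centred Gaussian $S_i^e \sim \mathcal N(0,(\sigma_i^e)^2)$ we have $\partial_{s_i}\log p^e(s_i) = -s_i/(\sigma_i^e)^2$. Hence
$$
\partial_{s_i}\big(\log p^0(s_i)-\log p^e(s_i)\big) = \alpha_i^e s_i .
$$
The inverse mixing map is $s_1 = x_1$, $s_2 = x_2 - a x_1$. So $\partial_{x_1}s_2 = -a$ and $\partial_{x_2}s_2 = 1$, and these are constant. Plugging into the change-of-variables identity of \cref{example:bivariate_sink} gives, for $e=1,2$,
$$
\Delta(0,e,\f(\s))_1 = \alpha_1^e s_1 - a\,\alpha_2^e s_2, \qquad \Delta(0,e,\f(\s))_2 = \alpha_2^e s_2 .
$$

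Next I check part (i) of \cref{ass:sufficient_variability}, which should hold for every $a$. The component $\Delta(0,2,\cdot)_2 = \alpha_2^2 s_2$ vanishes only on the line $\{s_2=0\}$, because $\alpha_2^2 \neq 0$. The component $\Delta(0,2,\cdot)_1 = \alpha_1^2 s_1 - a\alpha_2^2 s_2$ is a nonzero linear form, because $\alpha_1^2\neq 0$, so it also vanishes only on a line. Lines are Lebesgue-null, and the Gaussian law of $\bfS$ is absolutely continuous, so (i) holds almost surely for all $a\in\R$.

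For part (ii) I treat the graph as that of \cref{example:bivariate_sink}, namely $X_1 \to X_2$, so $X_1$ is the only non-sink and the only admissible index is $j=2$. Differentiating in $s_2$ gives the vector $(-a\alpha_2^1, -a\alpha_2^2)$. The collinearity determinant with $(\Delta(0,1,\cdot)_1,\Delta(0,2,\cdot)_1)$ is
$$
(\alpha_1^1 s_1 - a\alpha_2^1 s_2)(-a\alpha_2^2) - (\alpha_1^2 s_1 - a\alpha_2^2 s_2)(-a\alpha_2^1).
$$
In this expression the $s_2$-terms cancel, since $a^2\alpha_2^1\alpha_2^2 s_2$ appears with both signs. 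What remains is
$$
a\, s_1\,\big(\alpha_2^1\alpha_1^2 - \alpha_2^2\alpha_1^1\big).
$$
Under the third hypothesis this is nonzero off the null set $\{s_1=0\}$ whenever $a\neq 0$, so (ii) holds with positive (indeed full) measure. When $a=0$ the determinant vanishes identically and (ii) fails.

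The step needing the most care is interpreting the case $a=0$, rather than the computation itself. At $a=0$ the edge $X_1\to X_2$ disappears and causal minimality is violated, so one could argue that $X_1$ becomes a sink and (ii) is vacuous. I will make explicit that, within the parametrized family of \cref{example:bivariate_sink}, $X_1$ is designated as the parent and the edge is part of the model. On that reading the identically zero determinant is exactly the failure of \cref{ass:sufficient_variability}, and the exceptional set is precisely $\{0\}$. The two computations for $a=0$ and $a\neq0$ together give the claimed equality, not just an inclusion.
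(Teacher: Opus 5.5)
Your proposal is correct and follows the paper's proof essentially step by step. It uses the same closed-form score differences $\Delta(0,e,\cdot)_1=\alpha_1^e s_1-a\alpha_2^e s_2$ and $\Delta(0,e,\cdot)_2=\alpha_2^e s_2$, and the same collinearity determinant $a(\alpha_2^1\alpha_1^2-\alpha_2^2\alpha_1^1)s_1$ with $j=2$. The paper leaves implicit how $\alpha_1^2,\alpha_2^2\neq 0$ give part (i), and how $X_1$ stays a designated non-sink at $a=0$; you spell both out, but the argument is the same.
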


\begin{proof}
For $e \in \set{1,2}$,
$$
\partial_{s_i}\left(\log p^0(s_i)-\log p^e(s_i)\right) = \alpha_i^e s_i,
$$
hence
$$
\Delta(0,e,\x)_2 = \alpha_2^e s_2,
\qquad
\Delta(0,e,\x)_1 = \alpha_1^e s_1 - a \alpha_2^e s_2.
$$
Therefore \cref{ass:sufficient_variability}(i) holds for both $i=1,2$.

The only non-sink is $X_1$, and in \cref{ass:sufficient_variability}(ii) we can take $j=2$. Since
$$
\partial_{s_2}\Delta(0,e,\x)_1 = - a \alpha_2^e,
$$
the determinant testing collinearity is
$$
\Delta(0,1,\x)_1 \, \partial_{s_2}\Delta(0,2,\x)_1
-
\Delta(0,2,\x)_1 \, \partial_{s_2}\Delta(0,1,\x)_1
=
a \left(\alpha_2^1 \alpha_1^2 - \alpha_2^2 \alpha_1^1\right) s_1.
$$
By assumption, this vanishes on a positive-measure set if and only if $a=0$. Hence \cref{ass:sufficient_variability}(ii) holds for every $a \neq 0$, and fails for $a=0$.
\end{proof}


\section{Algorithm and additional experimental results}
In this section we present further details on the MCD algorithm, details on the compute resources, and additional experimental results.

\subsection{The MCD algorithm}\label{app:algorithm}
The MCD algorithm is introduced in \cref{sec:algorithm}, \cref{alg:mcd}. Here we provide further details on statistical estimation procedures and computational complexity.

\paragraph{Statistical inference steps: the score matching approach.} The key object to infer from the data is the difference of the score between different environments, i.e., the quantity $\Delta(0, e, \x)$ of \cref{eq:deltas}. Then, the main statistical inference task is estimating partial derivatives of joint log-densities. This is a well-studied problem, and can be addressed with an umbrella of techniques that go under the name of score matching \citep{hyvarinen2005estimation} ($\nabla \log p$ is known as the score function). In this work, we adopt the \textit{Stein estimator} introduced by \citet{li2018gradient} (to which we point the interested reader for a thorough treatment of Stein estimation techniques). For the independence testing required to identify a sink node at each iteration of the algorithm, we use the HSIC test \citep{gretton2007hsic}, also a standard choice in causality \citep{zhang2012kernel}. Both the score estimator and the independence testing are correct in the population limit. The same property is hence inherited by MCD.

\paragraph{Algorithmic complexity.}  The method recursively iterates on all nodes to allocate them a position in the estimated causal order. Each iteration involves running the gradient $\nabla \log p_{\bfX}$ estimator for each input environment; this which requires solving a kernel ridge regression problem with cubic complexity in the number of nodes. Overall, for $\mathcal E$ environments, $d$ nodes in the graph, and $n$ number of samples, the score estimation requires $\mathcal O(\mathcal E dn^3)$ operations. The HSIC independence testing requires $\mathcal O(n^2d)$ operations, and is run for each node, contributing $\mathcal O(d^2n^2)$. Overall, MCD has $\mathcal O(\mathcal E dn^3 + d^2n^2)$. 

 \begin{figure}
     \centering
     \includegraphics[width=0.65\linewidth]{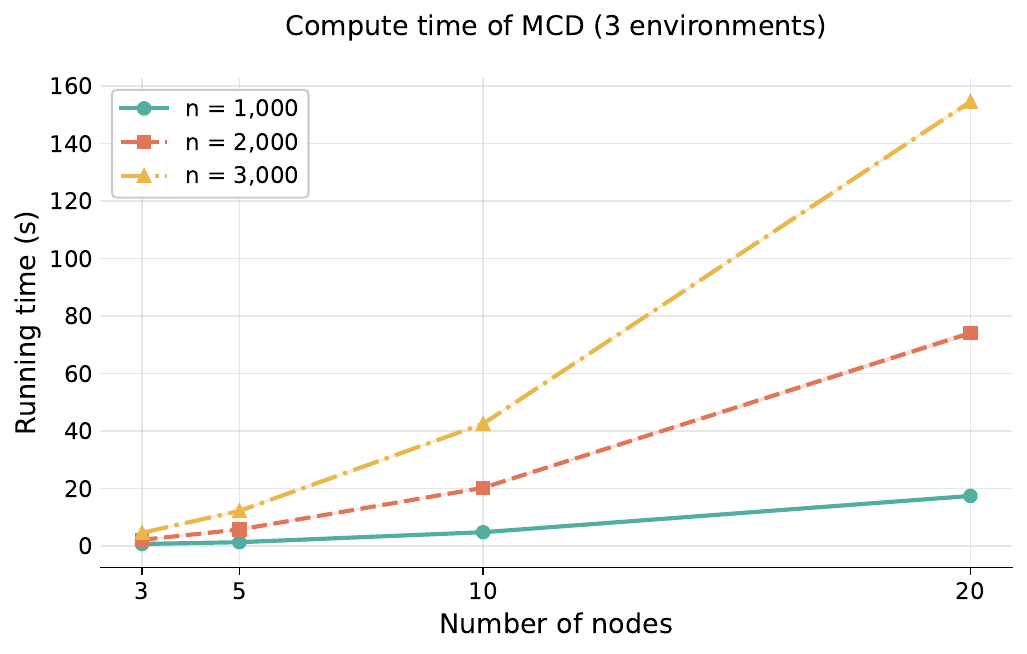}
     \caption{Average ($20$ seeds) execution time of the MCD algorithm, per number of samples and number of nodes. Experiments are run on a laptop Lenovo ThinkPad T14 Gen 5 on CPU.}
     \label{fig:execution_time}
 \end{figure}

\subsection{Additional experimental results}\label{app:experiments}
In this section, we present additional experimental results on the MCD method, in support of our theory.

\subsubsection{Statistical efficiency of MCD}\label{app:statistical_efficiency}
We empirically evaluate the statistical efficiency of the MCD algorithm. In \cref{fig:statistical_efficiency} we show the average topological order divergence on $3$ to $20$ nodes of our method, achieved when each base or auxiliary environment is assigned $1000,2000$ or $3000$ samples. We observe meaningful improvements when increasing datasets size from $1000$ to $2000$; interestingly, after $2000$ samples the quality of inference is substantially stable. 

\begin{figure}
    \centering
    \includegraphics[width=0.65\linewidth]{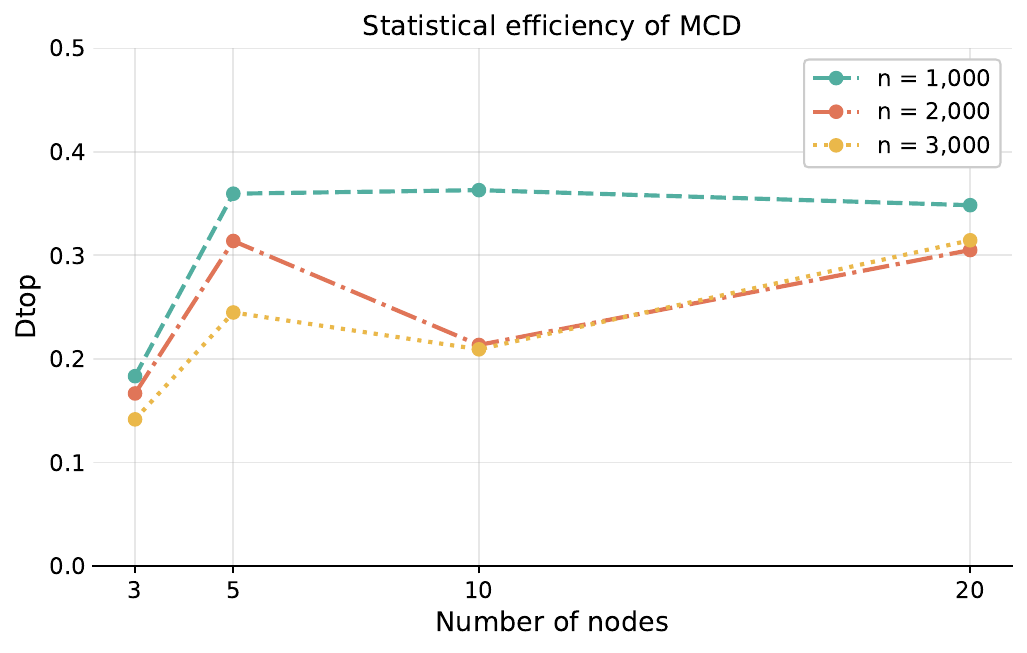}
    \caption{Average normalized $D_{\textnormal{top}}$ (the lower, the better) and $95\%$ confidence interval ($20$ seeds) of MCD at varying datasets size. $n$ refers to the number of samples observed in each environment. We see that for $n=1000$ MCD already infers causality better than random (expected accuracy at $0.5$); for $2000$ samples MCD significantly increases the accuracy, which remains almost stable for $n=3000$.}
    \label{fig:statistical_efficiency}
\end{figure}

\subsubsection{$3$ environments is all it takes}\label{app:environments_ablation}
In \cref{fig:environments_ablation} we evaluate MCD accuracy when we add auxiliary information from more environments. We consider settings where inference occurs on $3,5,7$ datasets: we observe that adding environments beyond $3$ does not affect the accuracy with statistical significance (sometimes leading to marginal improvements, sometimes to marginal degradation of the mean normalized $D_{\textnormal{top}}$). This is in line with our main claim in \cref{thm:full_graph_identifiability}, stating that \textit{only} two auxiliary environments are sufficient for causal graph discovery.

\begin{figure}
    \centering
    \includegraphics[width=0.7\linewidth]{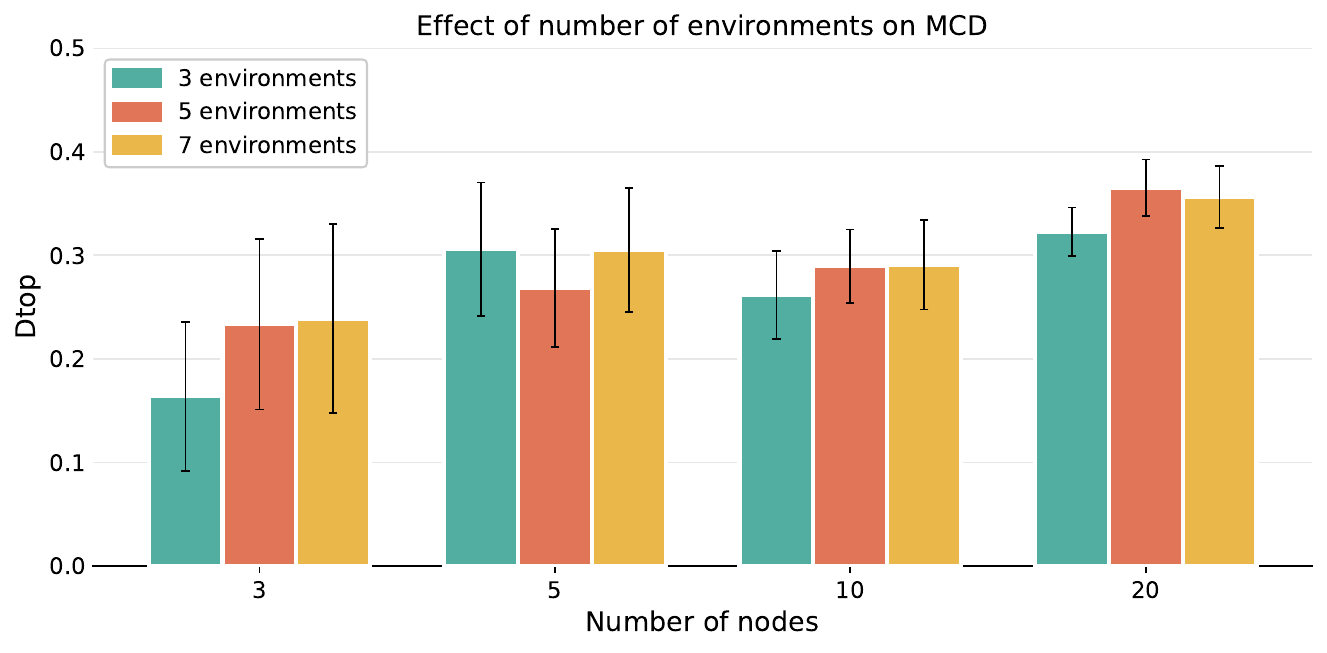}
    \caption{We compare MCD  average normalized $D_{\textnormal{top}}$ (the lower, the better) when inference occurs on $3,5,7$ environments; in line with the findings of \cref{thm:full_graph_identifiability}, $3$ datasets are sufficient for causal discovery. In fact, up to finite samples effects, adding environments does not clearly improves inference accuracy. Error bars denote $95\%$ confidence intervals over $20$ seeds.}
    \label{fig:environments_ablation}
\end{figure}

\subsubsection{Ablation study on the mechanism type}\label{app:mechanisms_ablation}
We analyse the impact of the mechanism choice on the results of \cref{fig:main_experiments} in \cref{sec:experiments}. In particular, we run inference with $3$ environments, each with $3000$ samples, and repeat each experimental run on $20$ seeds. The \textit{red} curve denotes MCD accuracy achieved on data generated by SCMs with linear mechanisms only. The  \textit{green} curve, similarly, is the accuracy when mechanisms are only nonlinear, randomly sampled from a neural network (as described in \cref{sec:experiments}). Remarkably, adding (severe) nonlinearity in the data generating process does not affect MCD ability to infer causality.

\begin{figure}
    \centering
    \includegraphics[width=0.65\linewidth]{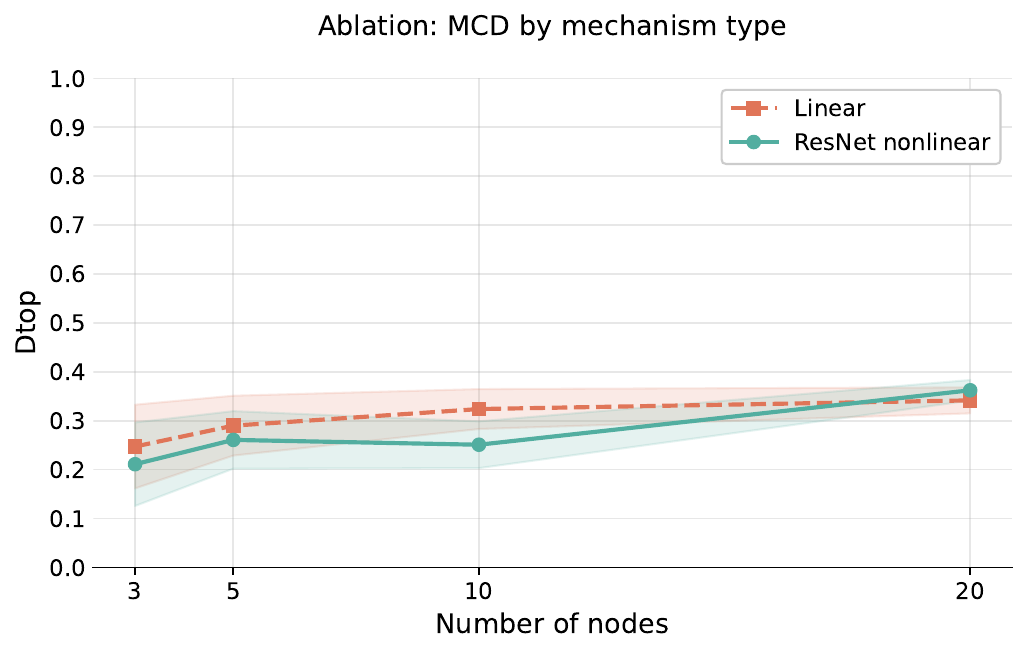}
    \caption{We study MCD average $D_{\textnormal{top}}$ (the lower, the better) separately, on synthetic datasets whose causal mechanisms are \textit{only} linear or \textit{only} nonlinear (parametrized with a ResNet, as described in \cref{sec:experiments}). We find that MCD accuracy is substantially unaffected by the nonlinearity, as predicted by the theory. Error bars denote $95\%$ confidence intervals over $20$ seeds.}
    \label{fig:mechanisms_ablation}
\end{figure}

\end{document}